\newtheorem{theorem}{Theorem}
\newtheorem{lemma}{Lemma}
\theoremstyle{definition}
\newtheorem{definition}{Definition}
\newcommand{\bs}{{\bf s}}
\newcommand{\ba}{{\bf a}}
\newcommand{\br}{{\bf r}}
\newcommand{\EX}[2]{\mathbb{E}_{#1}\left[#2\right]}
\newcommand{\METRICNAME}{OPC}
\newcommand{\QDIFFNAME}{SoftOPC}
\newcommand{\code}[1]{\texttt{#1}}
\def\eqref#1{equation~\ref{#1}}
\def\1{\bm{1}}
\DeclareMathAlphabet{\mathsfit}{\encodingdefault}{\sfdefault}{m}{sl}
\SetMathAlphabet{\mathsfit}{bold}{\encodingdefault}{\sfdefault}{bx}{n}
\newcommand{\E}{\mathbb{E}}
\DeclareMathOperator*{\argmax}{arg\,max}
\DeclareMathOperator{\sign}{sign}
\DeclareMathOperator{\sigmoidfn}{sigmoid}
\title{Off-Policy Evaluation via Off-Policy Classification}
\newcommand{\google}{Google Brain, Mountain View, USA}
\newcommand{\berk}{University of California Berkeley, Berkeley, USA}
\newcommand{\deepmind}{DeepMind, London, UK}
\newcommand{\googlesym}{1}
\newcommand{\deepmindsym}{2}
\newcommand{\berksym}{3}
\newcommand{\good}{feasible}
\newcommand{\bad}{catastrophic}
\author{%
  \textbf{Alex Irpan}\textsuperscript{\textnormal{\googlesym{}}},
  \textbf{Kanishka Rao}\textsuperscript{\textnormal{\googlesym{}}},
  \textbf{Konstantinos Bousmalis}\textsuperscript{\textnormal{\deepmindsym{}}}, \\
  \textbf{Chris Harris}\textsuperscript{\textnormal{\googlesym{}}},
  \textbf{Julian Ibarz}\textsuperscript{\textnormal{\googlesym{}}},
  \textbf{Sergey Levine}\textsuperscript{\textnormal{\googlesym{},\berksym{}}} \\
  \AND
  \textnormal{\textsuperscript{\googlesym{}}}\textnormal{\google{}} \\
  \textsuperscript{\deepmindsym{}}\textnormal{\deepmind{}} \\
  \textsuperscript{\berksym{}}\textnormal{\berk{}} \\
  \AND
  \texttt{\{alexirpan,kanishkarao,konstantinos,ckharris,julianibarz,slevine\}@google.com}
}
\begin{document}

\maketitle

\begin{abstract}
In this work, we consider the problem of model selection for deep reinforcement learning (RL) in real-world environments. Typically, the performance of deep RL algorithms is evaluated via on-policy interactions with the target environment. However, comparing models in a real-world environment for the purposes of early stopping or hyperparameter tuning is costly and often practically infeasible.
This leads us to examine off-policy policy evaluation (OPE) in such settings.
We focus on OPE for value-based methods, which are of particular interest in deep RL, with applications like robotics, where off-policy algorithms based on Q-function estimation can often attain better sample complexity than direct policy optimization.
Existing OPE metrics either rely on a model of the environment, or the use of importance sampling (IS) to correct for the data being off-policy. 
However, for high-dimensional observations, such as images, models of the environment can be difficult to fit and value-based methods can make IS hard to use or even ill-conditioned, especially when dealing with continuous action spaces.
In this paper, we focus on the specific case of MDPs with continuous action spaces and sparse binary rewards, which is representative of many important real-world applications. We propose an alternative metric that relies on neither models nor IS, by framing OPE as a positive-unlabeled (PU) classification problem with the Q-function as the decision function. We experimentally show that this metric outperforms baselines on a number of tasks. Most importantly, it can reliably predict the relative performance of different policies in a number of generalization scenarios, including the transfer to the real-world of policies trained in simulation for an image-based robotic manipulation task.
\end{abstract}

\section{Introduction}
\label{introduction}
Supervised learning has seen significant advances
in recent years, in part due to the use of large, standardized datasets~\cite{imagenet_cvpr09}. When researchers can evaluate real performance of their methods on the same data via a standardized offline metric, the progress of the field can be rapid.
Unfortunately, such metrics have been lacking in reinforcement learning (RL).
Model selection and performance evaluation in RL are typically done by estimating the average on-policy return of a method in the target environment. Although this is possible in most simulated environments~\citep{todorov2012mujoco,bellemare2013arcade,brockman2016openai}, real-world environments, like in robotics, make this difficult and expensive~\citep{Thomas2015-highconfidence}. Off-policy evaluation (OPE) has the potential to change that: a robust off-policy metric could be used together with realistic and complex data to evaluate the expected performance of off-policy RL methods, which would enable rapid progress on important real-world RL problems. Furthermore, it would greatly simplify transfer learning in RL, where OPE would enable model selection and algorithm design in simple domains (e.g., simulation) while evaluating the performance of these models and algorithms on complex domains (e.g., using previously collected real-world data).

Previous approaches to off-policy evaluation~\citep{precup2000eligibility,Dudik2011-doublyrobust,Jiang2015-doublyrobust, Thomas2016-zn} generally use importance sampling (IS) or learned dynamics models. However, this makes them difficult to use with many modern deep RL algorithms. First, OPE is most useful in the off-policy RL setting, where we expect to use real-world data as the ``validation set'', but many of the most commonly used off-policy RL methods are based on value function estimation, produce deterministic policies~\citep{lillicrap2015continuous,van2016deep}, and do not require any knowledge of the policy that generated the real-world training data. This makes them difficult to use with IS. Furthermore, many of these methods might be used with high-dimensional observations, such as images. Although there has been considerable progress in predicting future images~\citep{babaeizadeh2018stochastic, lee2018stochastic}, learning sufficiently accurate models in image space for effective evaluation is still an open research problem. We therefore aim to develop an OPE method that requires neither IS nor models.

We observe that for model selection, it is sufficient to predict some statistic \textit{correlated} with policy return, rather than directly predict policy return.
We address the specific case of binary-reward MDPs: tasks where the agent receives a non-zero reward only once during an episode%
, at the final timestep (Sect.~\ref{sec:preliminaries}).
These can be interpreted as tasks where the agent can either ``succeed'' or ``fail'' in each trial, and although they form a subset of all possible MDPs, this subset is quite representative of many real-world tasks, and is actively used e.g. in robotic manipulation~\citep{kalashnikov2018qt,riedmiller2018learning}.
The novel contribution of our method (Sect.~\ref{sec:methods}) is to frame OPE as a positive-unlabeled (PU) classification~\cite{kiryo2017positive} problem, which provides for a way to derive OPE metrics that are both \textsl{(a)} fundamentally different from prior methods based on IS and model learning, and \textsl{(b)} perform well in practice on both simulated and real-world tasks.
Additionally, we identify and present (Sect.~\ref{sec:generalization}) a list of generalization scenarios in RL that we would want our metrics to be robust against. We experimentally show (Sect.~\ref{sec:experiments}) that our suggested OPE metrics outperform a variety of baseline methods across all of the evaluation scenarios, including a simulation-to-reality transfer scenario for a vision-based robotic grasping task (see Fig.~\ref{fig:sim2real}).

\vspace{-0.1in}
\begin{figure}[t]
    \centering
    \begin{subfigure}[b]{0.42\linewidth}
        \includegraphics[width=\linewidth]{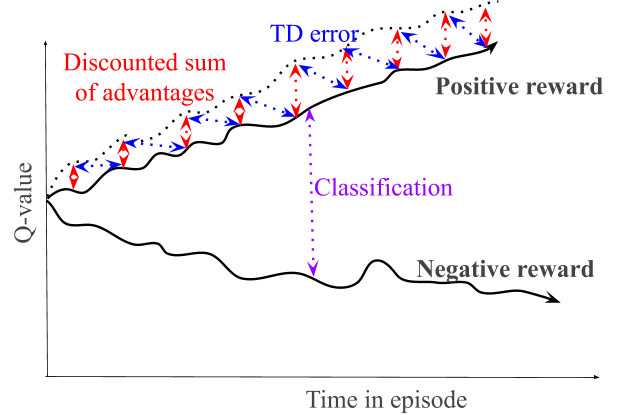}
        \caption{Visual summary of off-policy metrics}
        \label{fig:metrics_picture}
    \end{subfigure}
    \begin{subfigure}[b]{0.57\linewidth}
        \includegraphics[width=0.48\linewidth]{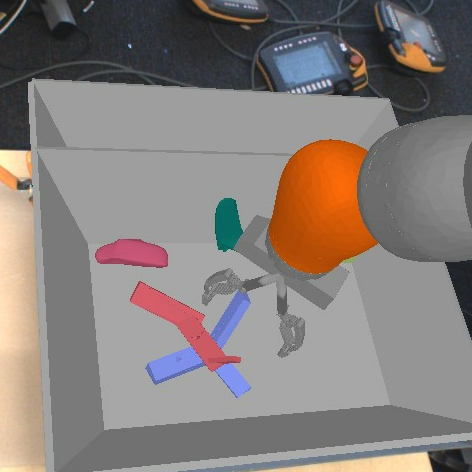}
        \includegraphics[width=0.48\linewidth]{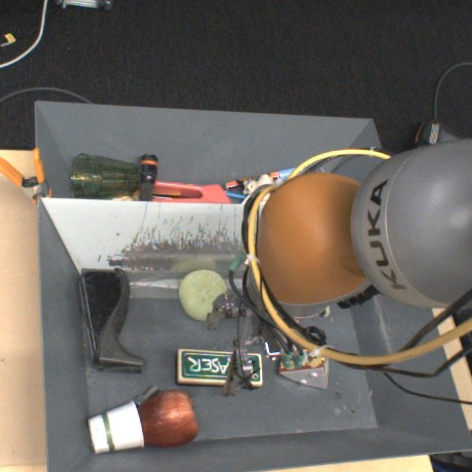}
        \caption{Robotics grasping simulation-to-reality gap.}
        \label{fig:sim2real}
    \end{subfigure}
    \caption{\textbf{(a) Visual illustration of our method:} We propose using classification-based approaches to do off-policy evaluation. Solid curves represent $Q(\bs,\ba)$ over a positive and negative trajectory, with the dashed curve representing $\max_{\ba'} Q(\bs,\ba')$ along states the positive trajectory visits (the corresponding negative curve is omitted for simplicity). Baseline approaches (blue, red) measure Q-function fit between $Q(\bs,\ba)$ to $\max_{\ba'} Q(\bs,\ba')$. Our approach (purple) directly measures separation of $Q(\bs,\ba)$ between positive and negative trajectories. \textbf{(b) The visual ``reality gap'' of our most challenging task:} off-policy evaluation of the generalization of image-based robotic agents trained solely in simulation (left) using historical data from the target real-world environment (right).}
    
\end{figure}

\section{Preliminaries}
\label{sec:preliminaries}
We focus on finite--horizon Markov decision processes (MDP). We define an MDP as (${\cal S}, {\cal A}, {\cal P}, {\cal S}_0,r, \gamma$). $\cal S$ is the state--space, $\cal A$ the action--space, and both can be continuous. ${\cal P}$ defines transitions to next states given the current state and action, ${\cal S}_0$ defines initial state distribution, $r$ is the reward function, and $\gamma \in [0,1]$ is the discount factor. 
Episodes are of finite length $T$:
at a given time-step $t$ the agent is at state $\bs_t\in {\cal S}$, samples an action $\ba_t\in {\cal A}$ from a policy $\pi$, receives a reward $r_t = r(\bs_t, \ba_t)$, and observes the next state $\bs_{t+1}$ as determined by $\mathcal{P}$.

The goal in RL is to learn a policy $\pi(\ba_t|\bs_t)$ that maximizes the expected episode return $R(\pi) =  \E_{\pi}[\sum_{t=0}^{T}\gamma^tr(\bs_t,\ba_t)]$. A value of a policy for a given state $\bs_t$ is defined as $V^\pi(\bs_t)=\E_\pi{[\sum_{t'=t}^T\gamma^{t'-t}r(\bs_{t'},\ba_{t'}^\pi)]}$ where $\ba_{t}^\pi$ is the action $\pi$ takes at state $\bs_t$ and $\E_\pi$ implies an expectation over trajectories $\tau=(\bs_1, \ba_1, \ldots, \bs_T, \ba_T)$ sampled from $\pi$. Given a policy $\pi$, the expected value of its action $a_{t}$ at a state $s_{t}$ is called the Q-value and is defined as $Q^\pi(\bs_t, \ba_t)=\E_\pi{[r(\bs_t,\ba_t)+V^\pi(\bs_{t+1})]}$.

We assume the MDP is a binary reward MDP, which satisfies the following properties: $\gamma = 1$, the reward is $r_t = 0$ at all intermediate steps, and the final reward $r_T$ is in $\{0,1\}$, indicating whether the final state is a failure or a success. We learn Q-functions ${Q(\bs,\ba)}$ and aim to evaluate policies ${\pi(\bs) = \argmax_\ba Q(\bs, \ba)}$.

\subsection{Positive-unlabeled learning}
\label{sec:pu-learning}
Positive-unlabeled (PU) learning is a set of techniques for learning binary classification from partially labeled data, where we have many unlabeled points and some positively labeled points~\citep{kiryo2017positive}. We will make use of these ideas in developing our OPE metric. Positive-unlabeled data is sufficient to learn a binary classifier if the positive class prior $p(y=1)$ is known.

Let $(X, Y)$ be a labeled binary classification problem, where $Y = \{0,1\}$. Let $g:X \to \mathbb{R}$ be some decision function, and let ${\ell: \mathbb{R} \times \{0, 1\} \to \mathbb{R}}$ be our loss function. Suppose we want to evaluate loss $\ell(g(x), y)$ over negative examples $(x,y=0)$, but we only have unlabeled points $x$ and positively labeled points $(x,y=1)$. The key insight of PU learning is that the loss over negatives can be indirectly estimated from $p(y=1)$. For any $x \in X$,
\begin{equation}
    p(x) = p(x|y=1)p(y=1) + p(x|y=0)p(y=0)
\end{equation}
It follows that for any $f(x)$, \mbox{${\EX{X,Y}{f(x)} = p(y=1)\EX{X|Y=1}{f(x)}} + p(y=0)\EX{X|Y=0}{f(x)}$}, since by definition $\EX{X}{f(x)}=\int_x{p(x)f(x)dx}$. Letting $f(x) = \ell(g(x), 0)$ and rearranging gives
\begin{equation}
    p(y=0)\EX{X|Y=0}{\ell(g(x),0)} = \EX{X,Y}{\ell(g(x),0)} - p(y=1)\EX{X|Y=1}{\ell(g(x),0)}
    \label{eqn:pu-negative-risk}
\end{equation}
In Sect.~\ref{sec:methods}, we reduce off-policy evaluation of a policy $\pi$ to a positive-unlabeled classification problem. We provide reasoning for how to estimate $p(y=1)$, apply PU learning to estimate classification error with Eqn.~\ref{eqn:pu-negative-risk}, then use the error to estimate a lower bound on return $R(\pi)$ with Theorem~\ref{thm:error-bound}.

\section{Off-policy evaluation via state-action pair classification}
\label{sec:methods}

A Q-function $Q(\bs,\ba)$ predicts the expected return of each action $\ba$ given state $\bs$. The policy $\pi(\bs) = \argmax_\ba Q(\bs,\ba)$ can be viewed as a classifier that predicts the best action.
We propose an off-policy evaluation method connecting off-policy evaluation to estimating validation error for a positive-unlabeled (PU) classification problem \citep{kiryo2017positive}. Our metric can be used with Q-function estimation methods without requiring importance sampling, and can be readily applied in a scalable way to image-based deep RL tasks.

We present an analysis for binary reward MDPs, defined in Sec.~\ref{sec:preliminaries}.
In binary reward MDPs, each $(\bs_t,\ba_t)$ is either potentially effective, or guaranteed to lead to failure.

\begin{definition}
\label{defn:good-bad}
    In a binary reward MDP, $(\bs_t,\ba_t)$ is \textbf{\good{}} if an optimal policy $\pi^*$ has non-zero probability of achieving success, i.e an episode return of 1, after taking $\ba_t$ in $\bs_t$. A state-action pair $(\bs_t,\ba_t)$ is \textbf{\bad{}} if even an optimal $\pi^*$ has zero probability of succeeding if $\ba_t$ is taken. A state $\bs_t$ is \good{} if there exists a \good{} $(\bs_t, \ba_t)$, and a state $\bs_t$ is \bad{} if for all actions $\ba_t$, $(\bs_t, \ba_t)$ is \bad{}.
\end{definition}

Under this definition, the return of a trajectory $\tau$ is 1 only if all ($\bs_t, \ba_t$) in $\tau$ are \good{} (see Appendix~\ref{app:all-good}). This condition is necessary, but not sufficient, because success can depend on the stochastic dynamics. Since Definition~\ref{defn:good-bad} is defined by an optimal $\pi^*$, we can view \good{} and \bad{} as binary labels that are independent of the policy $\pi$ we are evaluating. Viewing $\pi$ as a classifier, we relate
the classification error of $\pi$ to a lower bound for return $R(\pi)$.

\begin{theorem}
\label{thm:error-bound}
Given a binary reward MDP and a policy $\pi$, let $c(\bs_t,\ba_t)$ be the probability that stochastic dynamics bring a \good{} $(\bs_t,\ba_t)$ to a \bad{} $\bs_{t+1}$, with $c = \max_{\bs,\ba} c(\bs,\ba)$. Let $\rho^{+}_{t,\pi}$ denote the state distribution at time $t$, given that $\pi$ was followed, all its previous actions $\ba_1, \cdots, \ba_{t-1}$ were \good{}, and $\bs_t$ is \good{}. Let $\mathcal{A}_-(\bs)$ denote the set of \bad{} actions at state $\bs$,
and let $\epsilon_t = \EX{\rho^{+}_{t,\pi}}{\sum_{\ba \in \mathcal{A}_-(\bs_t)} \pi(\ba|\bs_t)}$ be the per-step expectation of $\pi$ making its first mistake at time $t$, with $\epsilon = \frac{1}{T} \sum_{i=1}^T \epsilon_t$ being average error over all $(\bs_t,\ba_t)$.
Then $R(\pi) \ge 1 - T(\epsilon + c)$.
\end{theorem}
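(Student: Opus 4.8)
The plan is to bound the probability that a rollout of $\pi$ ever leaves the ``\good{}'' region and then observe that never leaving it forces an episode return of $1$. First I would record the converse of the Appendix fact: in a binary-reward MDP a rollout $(\bs_1,\ba_1,\dots,\bs_T,\ba_T)$ in which \emph{every} pair $(\bs_t,\ba_t)$ is \good{} has return $1$, because the only nonzero reward $r_T=r(\bs_T,\ba_T)$ is a deterministic bit and $(\bs_T,\ba_T)$ being \good{} pins it to $1$. (If one allows a stochastic terminal reward instead, the probability of a losing roll is simply absorbed into $c$.) Hence, under rollouts of $\pi$, $R(\pi)=\Pr[r_T=1]\ge\Pr[\text{all }(\bs_t,\ba_t)\text{ are \good{}}]$, so it suffices to lower-bound this ``survival'' probability.

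Next I would set up the survival events. For $1\le t\le T$ let $G_t$ be the event that $\ba_1,\dots,\ba_{t-1}$ are \good{} and $\bs_t$ is \good{}; since a \bad{} state admits only \bad{} actions, $\ba_j$ \good{} already forces $\bs_j$ \good{}, so $G_t$ is exactly ``$\pi$ has not yet made a mistake by time $t$,'' and $G_1\supseteq G_2\supseteq\cdots$. Set $G_{T+1}=G_T\cap\{\ba_T\notin\mathcal{A}_-(\bs_T)\}$, the event that all pairs are \good{}. By the chain rule, $\Pr[G_{T+1}]=\Pr[G_1]\prod_{t=1}^{T}\Pr[G_{t+1}\mid G_t]$. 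The key step is the per-step bound: conditioned on $G_t$, the law of $\bs_t$ is by definition $\rho^{+}_{t,\pi}$, so $\Pr[\ba_t\in\mathcal{A}_-(\bs_t)\mid G_t]=\EX{\rho^{+}_{t,\pi}}{\sum_{\ba\in\mathcal{A}_-(\bs_t)}\pi(\ba|\bs_t)}=\epsilon_t$; and on the complementary event $(\bs_t,\ba_t)$ is a \good{} pair, so the dynamics move to a \bad{} $\bs_{t+1}$ with probability at most $c(\bs_t,\ba_t)\le c$. A union bound yields $\Pr[\neg G_{t+1}\mid G_t]\le\epsilon_t+c$, i.e.\ $\Pr[G_{t+1}\mid G_t]\ge 1-\epsilon_t-c$ (at $t=T$ the $c$ term is unnecessary but harmless to include).

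Finally I would assemble the estimate. Assuming the initial state is \good{} almost surely so that $\Pr[G_1]=1$ (otherwise a term $\Pr[\bs_1\text{ is \bad{}}]$ appears, which one can assume away or fold into the bound),
\[
R(\pi)\ \ge\ \prod_{t=1}^{T}\bigl(1-\epsilon_t-c\bigr)\ \ge\ 1-\sum_{t=1}^{T}(\epsilon_t+c)\ =\ 1-T\epsilon-Tc\ =\ 1-T(\epsilon+c),
\]
using the Weierstrass product inequality $\prod_i(1-x_i)\ge 1-\sum_i x_i$ for $x_i\in[0,1]$ together with $\sum_{t=1}^{T}\epsilon_t=T\epsilon$; if some $\epsilon_t+c$ exceeds $1$ the bound is vacuous but still holds because $R(\pi)\ge 0$. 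I expect the main obstacle to be the bookkeeping in the middle paragraph --- checking that the conditional state-law given ``no mistake so far'' is genuinely $\rho^{+}_{t,\pi}$ and that $\epsilon_t$ is exactly the first-mistake probability at step $t$, and handling the terminal step and the initial-state distribution cleanly; the product-to-sum relaxation and the arithmetic with $\epsilon$ are routine.
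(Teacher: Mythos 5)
Your proposal is correct and follows essentially the same route as the paper's proof: condition on no mistake so far (so the state law is $\rho^{+}_{t,\pi}$), bound the per-step probability of a first failure by $\epsilon_t + c$, and accumulate over $t$. Your product-of-survival-probabilities formulation with the Weierstrass inequality $\prod_t(1-\epsilon_t-c)\ge 1-\sum_t(\epsilon_t+c)$ is just a repackaging of the paper's direct sum over the time of first failure, and your explicit handling of the initial-state and terminal-reward caveats is consistent with (indeed slightly more careful than) the paper's argument.
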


See Appendix~\ref{app:error-bound-proof} for the proof. For the deterministic case ($c = 0$), we can take inspiration from 
imitation learning behavioral cloning bounds in~\citet{ross2010efficient} to prove the same result. This alternate proof is in Appendix~\ref{app:behavior-clone}.

A smaller error $\epsilon$ gives a higher lower bound on return, which implies a better $\pi$. This leaves estimating $\epsilon$.
The primary challenge with this approach is that we do not have negative labels -- that is, for trials that receive a return of 0 in the validation set, we do not know which $(\bs,\ba)$ were in fact \bad{}, and which were recoverable. We discuss how we address this problem next.

\subsection{Missing negative labels}
\label{sec:pu-derivation}
Recall that $(\bs_t, \ba_t)$ is \good{} if $\pi^*$ has a chance of succeeding after action $\ba_t$.
Since $\pi^*$ is at least as good as $\pi_b$, whenever $\pi_b$ succeeds, all tuples $(\bs_t, \ba_t)$ in the trajectory $\tau$ must be \good{}. However, the converse is not true, since failure could come from poor luck or suboptimal actions. Our key insight is that this is an instance of the positive-unlabeled (PU) learning problem from Sect.~\ref{sec:pu-learning}, where $\pi_b$ positively labels some $(\bs,\ba)$ and the remaining are unlabeled. This lets us use ideas from PU learning to estimate error.

In the RL setting, inputs $(\bs,\ba)$ are from $X = \mathcal{S} \times \mathcal{A}$, labels $\{0,1\}$ correspond to $\{\bad{}, \good{}\}$ labels, and a natural choice for the decision function $g$ is $g(\bs, \ba) = Q(\bs,\ba)$, since $Q(\bs,\ba)$ should be high for \good{} $(\bs,\ba)$ and low for \bad{} $(\bs,\ba)$.
We aim to estimate $\epsilon$, the probability that $\pi$ takes a \bad{} action -- i.e., that $(\bs, \pi(\bs))$ is a false positive.
Note that if $(\bs, \pi(\bs))$ is predicted to be \bad{}, but is actually \good{}, this false-negative does not impact future reward -- since the action is \good{}, there is still some chance of success. We want just the false-positive risk, ${\epsilon = p(y=0)\EX{X|Y=0}{\ell(g(x),0)}}$. This is the same as Eqn.~\ref{eqn:pu-negative-risk}, and using $g(\bs,\ba) = Q(\bs,\ba)$ gives
\begin{equation}
\label{eqn:error-estimate}
    \epsilon = \EX{(\bs,\ba)}{\ell(Q(\bs,\ba),0)} - p(y=1) \EX{(\bs,\ba),y=1}{\ell(Q(\bs,\ba),0)}.
\end{equation}
Eqn.~\ref{eqn:error-estimate} is the core of all our proposed metrics.
While it might at first seem that the class prior $p(y=1)$ should be task-dependent, recall that the error $\epsilon_t$ is the expectation over the state distribution $\rho^+_{t,\pi}$, where the actions $\ba_1,\cdots,\ba_{t-1}$ were all \good{}. This is equivalent to following an optimal ``expert'' policy $\pi^*$, and although we are estimating $\epsilon_t$ from data generated by behavior policy $\pi_b$, we should match the positive class prior $p(y=1)$ we would observe from expert $\pi^*$.
Expert $\pi^*$ will always pick \good{} actions.  Therefore, although the validation dataset will likely have both successes and failures, a prior of $p(y=1) = 1$ is the ideal prior, and this holds independently of the environment. We illustrate this further with a didactic example in Sect.~\ref{sect:smallscale}.

Theorem~\ref{thm:error-bound} relies on estimating $\epsilon$ over the distribution $\rho^{+}_{t,\pi}$, but our dataset $\mathcal{D}$ is generated by an unknown behavior policy $\pi_b$.
A natural approach here would be importance sampling (IS)~\citep{Dudik2011-doublyrobust}, but: 
\textsl{(a)} we assume no knowledge of $\pi_b$, and \textsl{(b)} IS is not well-defined for deterministic policies \mbox{$\pi(\bs) = \argmax_\ba Q(\bs, \ba)$}. Another approach is to subsample $\mathcal{D}$ to transitions $(\bs, \ba)$ where ${\ba = \pi(\bs)}$~\citep{liu2018representation}. This ensures an on-policy evaluation, but can encounter finite sample issues if $\pi_b$ does not sample $\pi(\bs)$ frequently enough.
Therefore, we assume classification error over $\mathcal{D}$ is a good enough proxy that correlates well with classification error over $\rho^{+}_{t,\pi}$.
This is admittedly a strong assumption, but empirical results in Sect.~\ref{sec:experiments} show surprising robustness to distributional mismatch. This assumption is reasonable if $\mathcal{D}$ is broad (e.g., generated by a sufficiently random policy), but may produce pessimistic estimates when potential \good{} actions
in $\mathcal{D}$ are unlabeled.

\subsection{Off-policy classification for OPE}

Based off of the derivation from Sect.~\ref{sec:pu-derivation},
our proposed off-policy classification (\METRICNAME{}) score is defined by the negative loss when $\ell$ in Eqn.~\ref{eqn:error-estimate} is the 0-1 loss. Let $b$ be a threshold, with ${\ell(Q(\bs,\ba), Y) = \frac{1}{2} + \left(\frac{1}{2} - Y\right)\sign(Q(\bs,\ba) - b)}$. This gives
\begin{equation}
    \text{\METRICNAME{}}(Q) = p(y=1) \EX{(\bs,\ba),y=1}{1_{Q(\bs,\ba)>b}} - \EX{(\bs,\ba)}{1_{Q(\bs,\ba)>b}}.
\end{equation}
To be fair to each $Q(\bs,\ba)$, threshold $b$ is set separately for each Q-function to maximize $\text{\METRICNAME{}}(Q)$. Given $N$ transitions and $Q(\bs,\ba)$ for all $(\bs,\ba) \in \mathcal{D}$, this can be done in $O(N \log N)$ time per Q-function (see Appendix~\ref{app:efficient-compute}). This avoids favoring Q-functions that systematically overestimate or underestimate the true value.

Alternatively, $\ell$ can be a soft loss function. We experimented with $\ell(Q(\bs,\ba), Y) = (1-2Y)Q(\bs,\ba)$, which is minimized when $Q(\bs,\ba)$ is large for $Y = 1$ and small for $Y = 0$. The negative of this loss is called the \QDIFFNAME{}.
\begin{equation}
    \text{\QDIFFNAME{}}(Q) = p(y=1) \EX{(\bs,\ba),y=1}{Q(\bs,\ba)} - \EX{(\bs,\ba)}{Q(\bs,\ba)}.
\end{equation}
If episodes have different lengths, to avoid focusing on long episodes, transitions $(\bs,\ba)$ from an episode of length $T$ are weighted by $\frac{1}{T}$ when estimating \QDIFFNAME{}. Pseudocode is in Appendix~\ref{app:efficient-compute}.

Although our derivation is for binary reward MDPs, both the \METRICNAME{} and \QDIFFNAME{} are purely evaluation time metrics, and can be applied to Q-functions trained with dense rewards or reward shaping, as long as the final evaluation uses a sparse binary reward.

\subsection{Evaluating OPE metrics}
\label{sec:eval-correlation}
The standard evaluation method for OPE is to report MSE to the true episode return~\citep{Thomas2016-zn,liu2018representation}. However, our metrics
do not estimate episode return directly. The $\text{\METRICNAME{}}(Q)$'s estimate of $\epsilon$ will differ from the true value, since it is estimated over our dataset $\mathcal{D}$ instead of over the distribution $\rho^+_{t,\pi}$. Meanwhile, $\text{\QDIFFNAME{}}(Q)$ does not estimate $\epsilon$ directly due to using a soft loss function. Despite this, the \METRICNAME{} and \QDIFFNAME{} are still useful OPE metrics if they \textit{correlate} well with $\epsilon$ or episode return $R(\pi)$.

We propose an alternative evaluation method. Instead of reporting MSE, we train a large suite of Q-functions $Q(\bs,\ba)$ with different learning algorithms, evaluating true return of the equivalent argmax policy for each $Q(\bs,\ba)$, then compare correlation of the metric to true return. We report two correlations, the coefficient of determination $R^2$ of line of best fit, and the Spearman rank correlation $\xi$~\citep{spearman1904}.\footnote{We slightly abuse notation here, and should clarify that $R^2$ is used to symbolize the coefficient of determination and should not be confused with $R(\pi)$, the average return of a policy $\pi$.} $R^2$ measures confidence in how well our linear best fit will predict returns of new models, whereas $\xi$ measures confidence that the metric ranks different policies correctly, without assuming a linear best fit.

\section{Applications of OPE for transfer and generalization}
\label{sec:generalization}
Off-policy evaluation (OPE)
has many applications. One is to use OPE as an early stopping or model selection criteria when training from off-policy data. Another is applying OPE to validation data collected in another domain to measure generalization to new settings.
Several papers \citep{Zhang2018-qj,raghu2018can,cobbe2018quantifying,Zhang2018-ah,nichol2018gotta} have examined overfitting and memorization in deep RL, proposing explicit train-test environment splits as benchmarks for RL generalization.
Often, these test environments are defined in simulation, where
it is easy to evaluate the policy in the test environment. This is no longer sufficient for real-world settings, where
test environment evaluation can be expensive. In real-world problems, off-policy evaluation is an inescapable part of measuring generalization performance in an efficient, tractable way.
To demonstrate this, we identify a few common generalization failure scenarios faced in reinforcement learning, applying OPE to each one.
When there is \textit{insufficient off-policy training data} and new data is not collected online, models may memorize state-action pairs in the training data.
 RL algorithms collect new on-policy data with high frequency. If training data is generated in a systematically biased way, we have \textit{mismatched off-policy training data.} The model fails to generalize because systemic biases cause the model to miss parts of the target distribution.
Finally, models trained in simulation usually do not generalize to the real-world, due to the \textit{training and test domain gap}: the differences in the input space (see Fig.~\ref{fig:sim2real} and Fig.~\ref{fig:procedural_training}) and the dynamics.
All of these scenarios are, in principle, identifiable by off-policy evaluation, as long as validation is done against data sampled from the final testing environment. We evaluate our proposed and baseline metrics across all these scenarios.

\begin{figure}[t]
\centering
\begin{subfigure}[b]{0.35\linewidth}
\includegraphics[width=\textwidth]{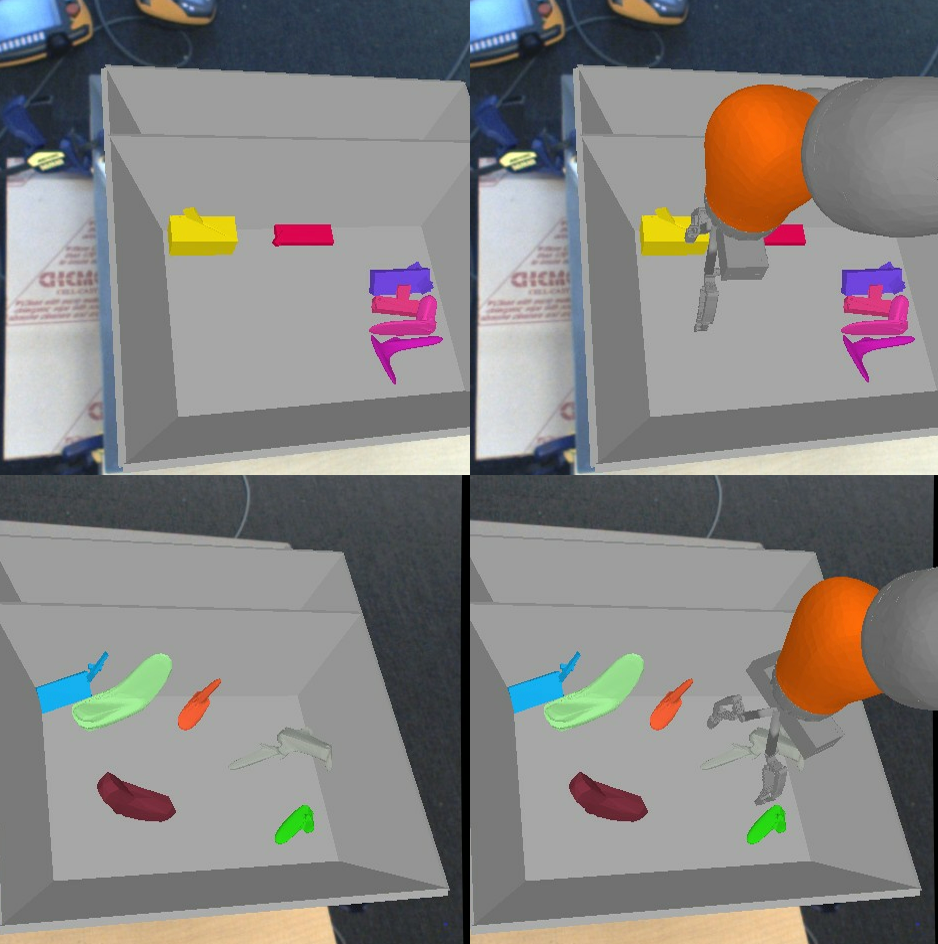}
\caption{Simulated samples}
\end{subfigure}
\begin{subfigure}[b]{0.35\linewidth}
\includegraphics[width=\textwidth]{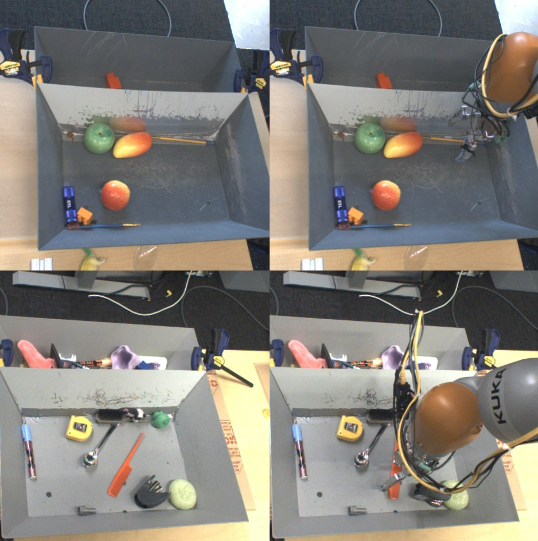}
\caption{Real samples}
\end{subfigure}
\caption{\textbf{An example of a training and test domain gap.} We display this with a robotic grasping task.
\textit{Left:} Images used during training, from (a) simulated grasping over procedurally generated objects; and from (b) the real-world, with a varied collection of everyday physical objects.\vspace{-5mm}}
\label{fig:procedural_training}
\end{figure}

\section{Related work}
\label{related}

Off-policy policy evaluation (OPE) predicts the return of a learned policy $\pi$ from a fixed off-policy dataset $\mathcal{D}$, generated by one or more behavior policies $\pi_b$.
Prior works~\citep{precup2000eligibility, Dudik2011-doublyrobust, Jiang2015-doublyrobust, liu2018representation,theocharous2015personalized,Hanna2017-yr} do so with importance sampling (IS)~\citep{horvitz1952generalization}, MDP modeling, or both.
Importance sampling requires querying $\pi(\ba|\bs)$ and $\pi_b(\ba|\bs)$ for any $\bs\in\mathcal{D}$, to correct for the shift in state-action distributions. In RL, the cumulative product of IS weights along $\tau$ is used to weight its contribution to $\pi$'s estimated value~\citep{precup2000eligibility}.
Several variants have been proposed, such as step-wise IS and weighted IS~\citep{mahmood2014weighted}.
In MDP modeling, a model is fitted to $\mathcal{D}$, and $\pi$ is rolled out in the learned model to estimate average return~\citep{mannor2007bias,Jiang2015-doublyrobust}.
The performance of these approaches is worse if dynamics or reward are poorly estimated, which tends to occur for image-based tasks. Improving these models is
an active research question~\citep{babaeizadeh2018stochastic, lee2018stochastic}.
State of the art methods combine IS-based estimators and model-based estimators using doubly robust estimation and ensembles to produce improved estimators with theoretical guarantees~\citep{Dudik2011-doublyrobust,dudik2014doubly,Jiang2015-doublyrobust,Thomas2016-zn,Hanna2017-yr}.

These IS and model-based OPE approaches assume importance sampling or model learning are feasible. This assumption often breaks down in modern deep RL approaches.
When $\pi_b$ is unknown, $\pi_b(\ba|\bs)$ cannot be queried. When doing value-based RL with deterministic policies, $\pi(\ba|\bs)$ is undefined for off-policy actions. When working with high-dimensional observations, model learning is often too difficult to learn a reliable model for evaluation.

Many recent papers \citep{Zhang2018-qj,raghu2018can,cobbe2018quantifying,Zhang2018-ah,nichol2018gotta} have defined train-test environment splits to evaluate RL generalization, but define test environments in simulation where there is no need for OPE. We demonstrate how OPE provides tools to evaluate RL generalization for real-world environments.
While to our knowledge no prior work has proposed a classification-based OPE approach, several prior works have used supervised classifiers to predict transfer performance from a few runs in the test environment~\cite{koos2012transferability,koos2010crossing}.
To our knowledge, no other OPE papers have shown results for large image-based tasks where neither importance sampling nor model learning are viable options.

\paragraph{Baseline metrics}
\label{sec:baselines}
Since we assume importance-sampling and model learning are infeasible,
many common OPE baselines do not fit our problem setting. In their place, we use other Q-learning based metrics that also do not need importance sampling or model learning and %
only require a $Q(\bs,\ba)$ estimate. The \textit{temporal-difference error} (TD Error) is the standard Q-learning training loss, and \citet{Farahmand2011-ec} proposed a model selection algorithm based on minimizing TD error. The \textit{discounted sum of advantages} ($\sum_t \gamma^t A^\pi)$ relates the difference in values $V^{\pi_b}(\bs) - V^\pi(\bs)$ to the sum of advantages $\sum_t \gamma^t A^\pi(\bs,\ba)$ over data from $\pi_b$, and was proposed by \citet{kakade2002} and \citet{Murphy2005-qc}. Finally, the \textit{Monte Carlo corrected error} (MCC Error) is derived by arranging the discounted sum of advantages into a squared error, and was used as a training objective by~\citet{Quillen2018-ms}.
The exact expression of each of these metrics is in Appendix~\ref{app:baseline_impl}.

Each of these baselines represents a different way to measure how well $Q(\bs,\ba)$ fits the true return. However, it is possible to learn a good policy $\pi$ even when $Q(\bs,\ba)$ fits the data poorly. In Q-learning, it is common to define an argmax policy $\pi = \argmax_\ba Q(\bs,\ba)$.
The argmax policy for $Q^*(\bs,\ba)$ is $\pi^*$, and $Q^*$ has zero TD error. But, applying any monotonic function to $Q^*(\bs,\ba)$ produces a $Q'(\bs,\ba)$, whose TD error is non-zero, but whose argmax policy is still $\pi^*$. A good OPE metric should rate $Q^*$ and $Q'$ identically.
This motivates our proposed classification-based OPE metrics: since $\pi$'s behavior only depends on the relative differences in Q-value, it makes sense to directly contrast Q-values against each other, rather than compare error between the Q-values and episode return. Doing so lets us compare Q-functions whose $Q(\bs,\ba)$ estimates are inaccurate. Fig.~\ref{fig:metrics_picture} visualizes the differences between the baseline metrics and classification metrics.

\section{Experiments}
\label{sec:experiments}
In this section, we investigate the correlation of \METRICNAME{} and \QDIFFNAME{} with true average return, and how they may be used for model selection with off-policy data. We compare the correlation of these metrics with the correlation of the baselines, namely the TD Error, Sum of Advantages, and the MCC Error (see Sect.~\ref{sec:baselines}) in a number of environments and generalization failure scenarios.
For each experiment, a validation dataset $\mathcal{D}$ is collected with a behavior policy $\pi_{b}$, and state-action pairs $(\bs,\ba)$ are labeled as \good{} whenever they appear in a successful trajectory.
In line with Sect.~\ref{sec:eval-correlation}, several Q-functions $Q(\bs,\ba)$ are trained for each task.
For each $Q(\bs,\ba)$, we evaluate each metric over $\mathcal{D}$
and true return of the equivalent argmax policy. We report both the coefficient of determination $R^2$ of line of best fit
and the Spearman's rank correlation coefficient $\xi$~\citep{spearman1904}. Our results are summarized in Table~\ref{tab:random_results} and Table~\ref{tab:full_results}.
Our OPC/SoftOPC metrics are implemented using $p(y=1) = 1$, as explained in Sect.~\ref{sec:methods} and Appendix~\ref{app:prior}.

\subsection{Simple Environments}
\label{sect:smallscale}

\paragraph{Binary tree.} 
\label{sect:tree}
As a didactic toy example, we used a binary tree MDP with depth of episode length $T$. In this environment,\footnote{Code for the binary tree environment is available at \url{https://bit.ly/2Qx6TJ7}.} each node is a state $\bs_t$ with $r_t=0$, unless it is a leaf/terminal state with reward $r_T\in \{0,1\}$. Actions are $\{\mbox{`left'},\mbox{`right'}\}$, and transitions are deterministic. Exactly one leaf is a success leaf with $r_T = 1$, and the rest have $r_T = 0$.
In our experiments we used a full binary tree of depth $T=6$. The initial state distribution was uniform over all non-leaf nodes, which means that the initial state could sometimes be initialized to one where failure is inevitable. The validation dataset $\mathcal{D}$ was collected by generating 1,000 episodes from a uniformly random policy. For the policies we wanted to evaluate, we generated 1,000 random Q-functions by sampling $Q(\bs, \ba) \sim U[0,1]$ for every $(\bs, \ba)$, defining the policy as $\pi(\bs) = \argmax_\ba Q(\bs,\ba)$. We compared the correlation of the actual on-policy performance of the policies with the scores given by the OPC, SoftOPC and the baseline metrics using $\mathcal{D}$, as shown in Table~\ref{tab:full_results}. \QDIFFNAME{} correlates best and \METRICNAME{} correlates second best.

\paragraph{Pong.}
As we are specifically motivated by image-based tasks with binary rewards, the Atari~\cite{bellemare2013arcade} Pong game was a good choice for a simple environment that can have these characteristics. The visual input is of low complexity, and the game can be easily converted into a binary reward task by truncating the episode after the first point is scored.
We learned Q-functions using DQN~\citep{mnih2015human} and DDQN~\citep{van2016deep}, varying hyperparameters such as the learning rate, the discount factor $\gamma$, and the batch size, as discussed in detail in Appendix~\ref{app:pong_details}. A total of 175 model checkpoints are chosen from the various models for evaluation, and true average performance is evaluated over 3,000 episodes for each model checkpoint.
For the validation dataset we used 38 Q-functions that were partially-trained with DDQN and generated 30 episodes from each, for a total of 1140 episodes. Similarly with the Binary Tree environments we compare the correlations of our metrics and the baselines to the true average performance over a number of on-policy episodes.
As we show in Table~\ref{tab:full_results}, both our metrics outperform the baselines,  \METRICNAME{} performs better than \QDIFFNAME{} in terms of $R^2$ correlation but is similar in terms of Spearman correlation $\xi$.

\paragraph{Stochastic dynamics.} To evaluate performance against stochastic dynamics, we modified the dynamics of the binary tree and Pong environment. In the binary tree, the environment executes a random action instead of the policy's action with probability $\epsilon$. In Pong, the environment uses sticky actions, a standard protocol for stochastic dynamics in Atari games introduced by~\cite{machado2018revisiting}. With small probability, the environment repeats the previous action instead of the policy's action. Everything else is unchanged. Results in Table~\ref{tab:random_results}. In more stochastic environments, all metrics drop in performance since $Q(s,a)$ has less control over return, but OPC and SoftOPC consistently correlate better than the baselines.

\begin{table*}[th]
\caption{Results from stochastic dynamics experiments. For each metric (leftmost column), we report $R^2$ of line of best fit and Spearman rank correlation coefficient $\xi$ for each environment (top row), over stochastic versions of the binary tree and Pong tasks from Sect.~\ref{sect:smallscale}. Correlation drops as stochasticity increases, but our proposed metrics (last two rows) consistently outperform baselines.}
\centering
\begin{small}
\begin{tabular}{l|cc|cc|cc|cc|cc}
    & \multicolumn{6}{c|}{\textbf{Stochastic Tree 1-Success Leaf}} &
    \multicolumn{4}{c}{\textbf{Pong Sticky Actions}} \\
    &
    \multicolumn{2}{c}{$\bm{\epsilon=0.4}$} & \multicolumn{2}{c}{$\bm{\epsilon=0.6}$} & \multicolumn{2}{c|}{$\bm{\epsilon=0.8}$} &
    \multicolumn{2}{c}{\textbf{Sticky 10\%}} &
    \multicolumn{2}{c}{\textbf{Sticky 25\%}}
    \\ \hline
        & $R^2$ & $\xi$ &
        $R^2$ & $\xi$ &
        $R^2$ & $\xi$ &
        $R^2$ & $\xi$ &
        $R^2$ & $\xi$
        \\ \hline
        \textbf{TD Err} &
        0.01 & -0.07 & 0.00 & -0.05 & 0.00 & -0.05 & 0.05 & -0.16 & 0.07 & -0.15 \\
        $\sum \bm{\gamma^{t}A^\pi}$ &
        0.00 & 0.01 & 0.01 & -0.07 & 0.00 & -0.02 & 0.04 & -0.29 & 0.01 & -0.22 \\
        \textbf{MCC Err} &
        0.07 & -0.27 & 0.01 & -0.06 & 0.01 & -0.11 & 0.02 & -0.32 & 0.00 & -0.18 \\ \hline
        \textbf{\METRICNAME{} (Ours)} &
        0.13 & 0.38 & 0.01 & 0.08 & 0.03 & 0.19 & \textbf{0.48} & \textbf{0.73} & \textbf{0.33} & \textbf{0.66} \\
        \textbf{\QDIFFNAME{} (Ours)} &
        \textbf{0.14} & \textbf{0.39} & \textbf{0.03} & \textbf{0.18} & \textbf{0.04} & \textbf{0.20} & 0.33 & 0.67 & 0.16 & 0.58
\end{tabular}
\end{small}
\label{tab:random_results}
\end{table*}
\vspace{-0.15in}

\subsection{Vision-based Robotic Grasping}

Our main experimental results were on simulated and real versions of a  robotic environment and a vision-based grasping task, following the setup from~\citet{kalashnikov2018qt}, the details of which we briefly summarize. The observation at each time-step is a $472\times472$ RGB image
from a camera placed over the shoulder of a robotic arm,
of the robot and a bin of objects, as shown in Fig.~\ref{fig:sim2real}.
At the start of an episode, objects are randomly dropped in a bin in front of the robot.
The goal is to grasp any of the objects in that bin.
Actions include continuous Cartesian displacements of the gripper, and the rotation of the gripper around the z-axis. The action space also includes three discrete commands: ``open gripper'', ``close gripper'', and ``terminate episode''. Rewards are sparse, with $r(\bs_T, \ba_T) = 1$ if any object is grasped
and $0$ otherwise. All models are trained with 
the fully off-policy QT-Opt algorithm as described in ~\citet{kalashnikov2018qt}. 

In simulation we define a training and a test environment by generating two distinct sets of 5 objects that are used for each, shown in Fig.~\ref{fig:train_test_objects}. In order to capture the different possible generalization failure scenarios discussed in Sect.~\ref{sec:generalization}, we trained Q-functions in a fully off-policy fashion with data collected by a hand-crafted policy with a 60\% grasp success rate and $\epsilon$-greedy exploration (with $\epsilon$=0.1) with two different datasets both from the training environment. The first consists of $100,000$ episodes, with which we can show we have \textit{insufficient off-policy training data} to perform well even in the training environment. The second consists of $900,000$ episodes, with which we can show we have sufficient data to perform well in the training environment, but due to \textit{mismatched off-policy training data} we can show that the policies do not generalize to the test environment (see Fig.~\ref{fig:train_test_objects} for objects and Appendix~\ref{app:sim_details} for the analysis). We saved policies at different stages of training which resulted in 452 policies for the former case and 391 for the latter. We evaluated the true return of these policies on 700 episodes on each environment and calculated the correlation with the scores assigned by the OPE metrics based on held-out validation sets of $50,000$ episodes from the training environment and $10,000$ episodes from the test one, which we show in  Table~\ref{tab:full_results}.

\begin{table*}[th]
    \caption{Summarized results of Experiments section. For each metric (leftmost column), we report $R^2$ of line of best fit and Spearman rank correlation coefficient $\xi$ for each environment (top row). These are: the binary tree and Pong tasks from Sect.~\ref{sect:smallscale}, simulated grasping with train or test objects, and real-world grasping from Sect.~\ref{sec:sim-robot}. Baseline metrics are discussed in Sect.~\ref{related}, and our metrics (\METRICNAME{}, \QDIFFNAME{}) are discussed in Sect.~\ref{sec:methods}. Occasionally, some baselines correlate well, but our proposed metrics (last two rows) are consistently among the best metrics for each environment.}
    \centering
    \vspace{-0.1in}
    \begin{small}
    \begin{tabular}{l|cc|cc|cc|cc|cc}
        &
        \multicolumn{2}{c|}{\textbf{Tree (1 Succ)}} & \multicolumn{2}{c|}{\textbf{Pong}} & \multicolumn{2}{c|}{\textbf{Sim Train}} & \multicolumn{2}{c|}{\textbf{Sim Test}} & \multicolumn{2}{c}{\textbf{Real-World}}  \\ \hline
        & $R^2$ & $\xi$ &
        $R^2$ & $\xi$ &
        $R^2$ & $\xi$ &
        $R^2$ & $\xi$ &
        $R^2$ & $\xi$ \\ \hline
        \textbf{TD Err} &
        0.02 & -0.15 & 0.05 & -0.18 & 0.02 & -0.37 & 0.10 & -0.51 & 0.17 & 0.48 \\
        $\sum \bm{\gamma^{t}A^\pi}$ &
        0.00 & 0.00 & 0.09 & -0.32 & \textbf{0.74} & 0.81 & \textbf{0.74} & \textbf{0.78} & 0.12 & 0.50 \\
        \textbf{MCC Err} &
        0.06 & -0.26 & 0.04 & -0.36 & 0.00 & 0.33 & 0.06 & -0.44 & 0.01 & -0.15 \\ \hline
        \textbf{\METRICNAME{} (Ours)} &
        \textbf{0.21} & 0.50 & \textbf{0.50} & 0.72 & 0.49 & \textbf{0.86} & 0.35 & 0.66 & 0.81 & 0.87 \\
        \textbf{\QDIFFNAME{} (Ours)} &
        0.19 & \textbf{0.51} & 0.36 & \textbf{0.75} & 0.55 & 0.76 & 0.48 & 0.77 & \textbf{0.91} & \textbf{0.94} \\
    \end{tabular}
    \end{small}
   \label{tab:full_results}
\end{table*}

The real-world version of the environment has objects that were never seen during training (see  Fig.~\ref{fig:sim2real} and~\ref{fig:kuka_real_v_sim}). We evaluated 15 different models, trained to have varying degrees of robustness to the \textit{training and test domain gap}, based on domain randomization and randomized--to-canonical adaptation networks~\citep{james2019rcan}.\footnote{For full details of each of the models please see Appendix~\ref{app:real_details}.} Out of these, 7  were trained on-policy purely in simulation. True average return was evaluated over 714 episodes with 7 different sets of objects, and true policy real-world performance ranged from 17\% to 91\%.
The validation dataset consisted of $4,000$ real-world episodes, 40\% of which were successful grasps and the objects used for it were separate from the ones used for final evaluation used for the results in Table~\ref{tab:full_results}.

\label{sec:sim-robot}

\vspace{-0.1in}
\begin{figure}[h]
\centering
    \begin{subfigure}[b]{0.35\linewidth}  %
        \includegraphics[width=\linewidth]{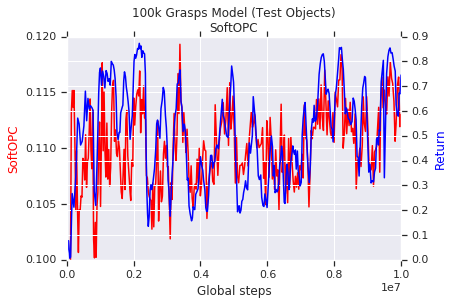}  %
        \vspace{-0.21in} %
        \caption{\QDIFFNAME{} and return in sim}
        \label{fig:correlation_scatter}
    \end{subfigure}
    \begin{subfigure}[b]{0.64\linewidth}  %
        \includegraphics[width=0.48\linewidth]{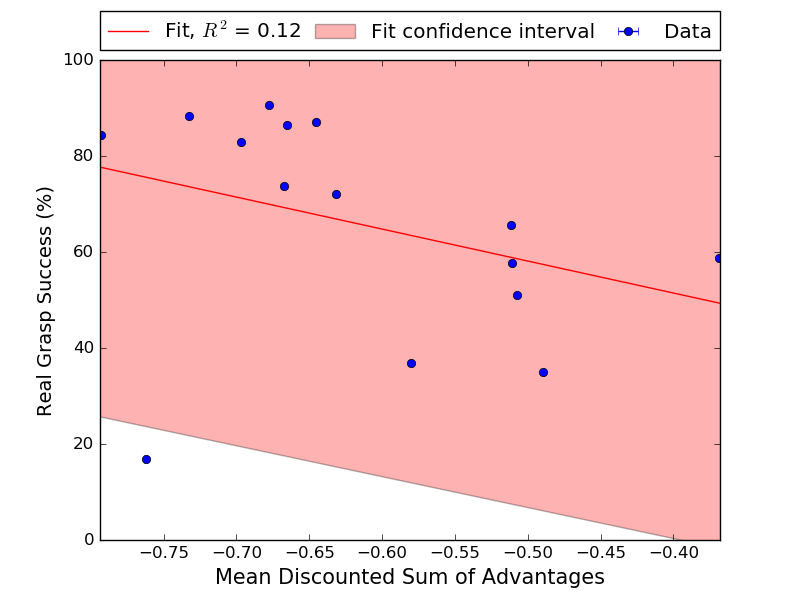}
        \includegraphics[width=0.48\linewidth]{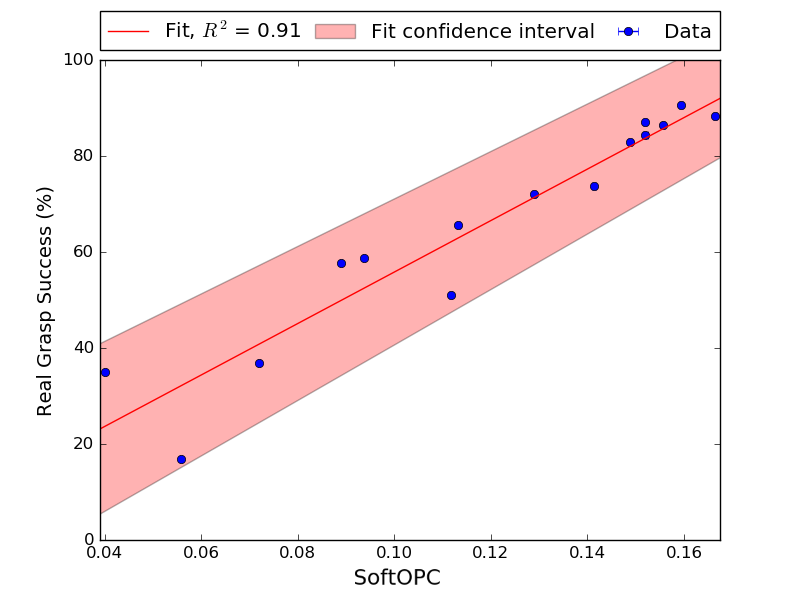}
        \vspace{-0.1in}
        \caption{Scatterplot for real-world grasping}
        \label{fig:all_metrics_all_policies}
    \end{subfigure}
    \caption{
    \textbf{(a): \QDIFFNAME{} in simulated grasping.}
    Overlay of \QDIFFNAME{} (red) and return (blue) in simulation for model trained with 100k grasps. \QDIFFNAME{} tracks episode return.
    \textbf{(b): Scatterplots for OPE metrics and real-world grasp success.} 
        Scatterplots for $\sum \gamma^{t'} A^\pi(\bs_{t'}, \ba_{t'})$ (left)  and \QDIFFNAME{} (right) for the Real-World grasping task. Each point is a different grasping model.
        Shaded regions are a 95\% confidence interval.
        $\sum \gamma^{t'} A^\pi(\bs_{t'}, \ba_{t'})$ works in simulation but fails on real data, whereas \QDIFFNAME{} works well in both.}
\end{figure}

\vspace{-0.1in}

\vspace{-0.1in}

\subsection{Discussion}
Table~\ref{tab:full_results} shows $R^2$ and $\xi$ for each metric for the different environments we considered. Our proposed \QDIFFNAME{} and \METRICNAME{} consistently outperformed the baselines, with the exception of the simulated robotic test environment, on which the SoftOPC performed almost as well as the discounted sum of advantages on the Spearman correlation (but worse on $R^2$). 
However, we show that SoftOPC more reliably ranks policies than the baselines for real-world performance without any real-world interaction, as one can also see in Fig.~\ref{fig:all_metrics_all_policies}. The same figure shows the sum of advantages metric that works well in simulation performs poorly in the real-world setting we care about.  Appendix~\ref{app:different_dataset} includes additional experiments showing correlation mostly unchanged on different validation datasets.

Furthermore, we demonstrate that SoftOPC can track the performance of a policy acting in the simulated grasping environment, as it is training in Fig.~\ref{fig:correlation_scatter}, which could potentially be useful for early stopping. 
Finally, SoftOPC seems to be performing slightly better than OPC in most of the experiments. We believe this occurs because the Q-functions compared in each experiment tend to have similar magnitudes. Preliminary results in Appendix~\ref{app:opc_comparison} suggest that when Q-functions have different magnitudes, \METRICNAME{} might
outperform \QDIFFNAME{}.

\vspace{-0.05in}
\section{Conclusion and future work}
We proposed \METRICNAME{} and \QDIFFNAME{}, classification-based off-policy evaluation metrics that can be used together with Q-learning algorithms. Our metrics can be used with binary reward tasks: tasks where each episode results in either a failure (zero return) or success (a return of one). While this class of tasks is a substantial restriction, many practical tasks actually fall into this category, including the real-world robotics tasks in our experiments. The analysis of these metrics shows that it can approximate the expected return in deterministic binary reward MDPs.
Empirically, we find that \METRICNAME{} and the \QDIFFNAME{} variant correlate well with performance across several environments, and predict generalization performance across several scenarios.
including the simulation-to-reality scenario, a critical setting for robotics.
Effective off-policy evaluation is critical for real-world reinforcement learning, where it provides an alternative to expensive real-world evaluations during algorithm development.
Promising directions for future work include developing a variant of our method that is not restricted to binary reward tasks. We include some initial work in Appendix~\ref{app:nonbinary}.
However, even in the binary setting, we believe that methods such as ours can provide for a substantially more practical pipeline for evaluating transfer learning and off-policy reinforcement learning algorithms.

\subsubsection*{Acknowledgements}
We would like to thank Razvan Pascanu, Dale Schuurmans, George Tucker, and Paul Wohlhart for valuable discussions.

\bibliography{iclr2019_conference}
\bibliographystyle{icml2019}
\newpage
\appendix

\section*{Appendix for Off-Policy Evaluation via Off-Policy Classification}

\setcounter{section}{0}
\renewcommand\thesection{\Alph{section}}

\section{Classification error bound}

\subsection{Trajectory $\tau$ return $1 \Rightarrow$ all $\ba_t$ \good{}}
\label{app:all-good}

Suppose this were not true. Then, there exists a time $t$ where $\ba_t$ is \bad{}. After executing said $\ba_t$, it should be impossible for $\tau$ to end in a success state. However, we know $\tau$ ends in success. By contradiction, all $(\bs_t, \ba_t)$ should be \good{}.

\subsection{Proof of Theorem 1}
\label{app:error-bound-proof}
To bound $R(\pi)$, it is easier to bound the failure rate $1 - R(\pi)$. Policy $\pi$ succeeds if and only if at every $\bs_t$, it selects a \good{} $\ba_t$ a \good{} $\ba_t$, and dynamics do not take it to a \bad{} $\bs_{t+1}$. The failure rate is the total probability $\pi$ makes its first mistake or is first brought to a \bad{} $\bs_{t+1}$, summed over all $t$.

The distribution $\rho^+_{t,\pi}$ is defined such that $\rho^+_{t,\pi}$ is the marginal state distribution at time $t$, conditioned on $\ba_1, \cdots, \ba_{t-1}$ being \good{} and $\bs_t$ being \good{}. This makes $\epsilon_t$ the expected probability $\pi$ choosing a \bad{} $\ba_t$ at time $t$, given no mistakes earlier in the trajectory. Conditioned on this, the failure rate at time $t$ is upper-bounded by $\epsilon_t + c$.

\vspace{-0.10in}
\begin{align}
    1 - R(\pi) &\le \sum_{t=1}^T p(\pi \text{ at \good{} $\bs_t$})\cdot(\epsilon_t + c) \\
    &\le\sum_{t=1}^T (\epsilon_t + c) \\
    &\le T(\epsilon + c)
\end{align}
This gives $R(\pi) \ge 1 - T(\epsilon+c)$ as desired. This bound is tight when $\pi$ is always at a \good{} $\bs_t$, which occurs when $c = 0$, $\epsilon_1 = \epsilon_2 = \cdots = \epsilon_{T-1} = 0$, and $\epsilon_T = T\epsilon$. When $c > 0$, this bound may be improvable.

\subsection{Alternate proof connecting to behavioral cloning in deterministic case}
\label{app:behavior-clone}
In a deterministic environment, we have $c(\bs, \ba) = 0$ for all $(\bs, \ba)$, and a policy that only picks \good{} actions will always achieve the optimal return of 1. Any such policy can be considered an expert policy $\pi^*$. Since $\epsilon$ is defined as the 0-1 loss over states conditioned on not selecting a \bad{} action, we can view $\epsilon$ as the 0-1 behavior cloning loss to an expert policy $\pi^*$. In this section, we present an alternate proof based on behavioral cloning bounds from~\citet{ross2010efficient}.

Theorem 2.1 of \citet{ross2010efficient} proves a $O(T^2\epsilon)$ cost bound for general MDPs. This differs from the $O(T\epsilon)$ cost derived above. The difference in bound comes because~\citet{ross2010efficient} derive their proof in a general MDP, whose cost is upper bounded by $1$ at every timestep. If $\pi$ deviates from the expert, it receives cost $1$ several times, once for every future timestep. In binary reward MDPs, we only receive this cost once, at the final timestep. Transforming the proof to incorporate our binary reward MDP assumptions lets us recover the $O(T\epsilon)$ upper bound from Appendix~\ref{app:error-bound-proof}. We briefly explain the updated proof, using notation from \citep{ross2010efficient} to make the connection more explicit.

Define $\epsilon_t$ as the expected 0-1 loss at time $t$ for $\pi$ under the state distribution of $\pi^*$. Since $\rho^+_{t,\pi}$ corresponds to states $\pi$ visits conditioned on never picking a \bad{} action, this is the same as our definition of $\epsilon_t$. The MDP is defined by cost instead of reward: cost $C(\bs)$ of state $\bs$ is $0$ for all timesteps except the final one, where $C(\bs) \in \{0,1\}$. Let $p_{t}$ be the probability $\pi$ hasn't made a mistake (w.r.t $\pi^*$) in the first $t$ steps, $d_t$ be the state distribution conditioned on no mistakes in the first $t-1$ steps, and $d'_t$ be the state distribution conditioned on $\pi$ making at least 1 mistake. In a general MDP with $0 \le C(\bs) \le 1$, total cost $J(\pi)$ is bounded by \mbox{$J(\pi) \le \sum_{t=1}^T [p_{t-1}\EX{d_t}{C_\pi(\bs)} + (1-p_{t-1})]$}, where the 1st term is cost while following the expert and the 2nd term is an upper bound of cost $1$ if outside of the expert distribution. In a binary reward MDP, since $C(\bs) = 0$ for all $t$ except $t = T$, we can ignore every term in the summation except the final one, giving

\begin{equation}
    J(\pi) = p_{T-1}\EX{d_T}{C_\pi(\bs_T)} + (1-p_{T-1})
\end{equation}

Note $\EX{d_T}{C_\pi(\bs_T)} = \epsilon_T$, and as shown in the original proof, $p_t \ge 1 - \sum_{i=1}^t \epsilon_i$. Since $p_{T-1}$ is a probability, we have $p_{T-1}\EX{d_T}{C_\pi(\bs_T)} \le \EX{d_T}{C_\pi(\bs_T)}$, which recovers the $O(T\epsilon)$ bound, and again this is tight when $\epsilon_1 = \cdots = \epsilon_{T-1} = 0, \epsilon_T = T\epsilon$.

\begin{equation}
    J(\pi) \le \EX{d_T}{C_\pi(\bs_T)} + \sum_{t=1}^{T-1} \epsilon_t = \sum_{t=1}^T \epsilon_T = T\epsilon
\end{equation}

\section{Algorithm pseudocode}
\label{app:efficient-compute}

We first present pseudocode for \QDIFFNAME{}.
\begin{equation}
    \text{\QDIFFNAME{}}(Q) = p(y=1) \EX{(\bs,\ba),y=1}{Q(\bs,\ba)} - \EX{(\bs,\ba)}{Q(\bs,\ba)}.
\end{equation}

\begin{algorithm}
	\caption{Soft Off-Policy Classification (SoftOPC)} 
	\begin{algorithmic}[1]
	    \Require Dataset $\mathcal{D}$ of trajectories $\tau=(\bs_1, \ba_1, \ldots, \bs_T, \ba_T)$, learned Q-function $Q(\bs, \ba)$, prior $p(y=1)$ (set to $1$ for all experiments).
	    \State PositiveAverages $\gets EMPTY\_LIST$
	    \State AllAverages $\gets EMPTY\_LIST$
		\For {$(\bs_1, \ba_1, \ldots, \bs_T, \ba_T, r_T) \in \mathcal{D}$}
		    \State Compute Q-values $Q(\bs_1, \ba_1), \ldots, Q(\bs_T, \ba_T)$.
		    \State AverageQ $\gets \frac{1}{T}\sum_t Q(\bs_t, \ba_t)$
		    \State AllAverages.append(AverageQ)
		    \If {$r_T = 1$}
		        \State PositiveAverages.append(AverageQ)
		    \EndIf
		\EndFor
		\State \Return $p(y=1)AVERAGE(\text{PositiveAverages}) - AVERAGE(\text{AllAverages})$
	\end{algorithmic}
	\label{alg:softopc}
\end{algorithm}

Next, we present pseudocode for \METRICNAME{}.
\begin{equation}
    \text{\METRICNAME{}}(Q) = p(y=1) \EX{(\bs,\ba),y=1}{1_{Q(s,a)>b}} - \EX{(\bs,\ba)}{1_{Q(s,a)>b}}
\end{equation}

Suppose we have $N$ transitions, of which $N^+$ of them have positive labels. Imagine placing each $Q(\bs,\ba)$ on the number line. Each $Q(\bs,\ba)$ is annotated with a score, $-1/N$ for unlabeled transitions and $p(y=1)/N^+ - 1/N$ for positive labeled transitions. Imagine sliding a line from $b = -\infty$ to $b = \infty$. At $b = -\infty$, the \METRICNAME{} score is $p(y=1) - 1$. The \METRICNAME{} score only updates when this line passes some $Q(\bs,\ba)$ in our dataset, and is updated based on the score annotated at $Q(\bs,\ba)$. After sorting by Q-value, we can find all possible \METRICNAME{} scores in $O(N)$ time by moving $b$ from $-\infty$ to $\infty$, noting the updated score after each $Q(\bs,\ba)$ we pass. Given $\mathcal{D}$, we sort the $N$ transitions in $O(N \log N)$, annotate them appropriately, then compute the maximum over all \METRICNAME{} scores. When doing so, we must be careful to detect when the same $Q(\bs,\ba)$ value appears multiple times, which can occur when $(\bs, \ba)$ appears in multiple trajectories.

\begin{algorithm}
	\caption{Off-Policy Classification (\METRICNAME{})} 
	\begin{algorithmic}[1]
	    \Require Dataset $\mathcal{D}$ of trajectories $\tau=(\bs_1, \ba_1, \ldots, \bs_T, \ba_T)$, learned Q-function $Q(\bs, \ba)$, prior $p(y=1)$ (set to $1$ for all experiments).
	    \State $N^+ \gets$ number of $(\bs,\ba) \in \mathcal{D}$ from trajectories where $r_T = 1$
	    \State $N \gets$ number of $(\bs, \ba) \in \mathcal{D}$
	    \State Q-values $\gets EMPTY\_DICTIONARY$
		\For {$(\bs_1, \ba_1, \ldots, \bs_T, \ba_T, r_T) \in \mathcal{D}$} \Comment{Prepare annotated number line}
		    \State Compute Q-values $Q(\bs_1, \ba_1), \ldots, Q(\bs_T, \ba_T)$.
		    \For {$t = 1, 2, \ldots, T$}
		        \If {$Q(\bs,\ba) \not\in\,$ Q-values.keys()}
		            \State Q-values[$Q(\bs,\ba)$] = 0
		        \EndIf
		        \If {$r_T = 1$}
		            \State Q-values[$Q(\bs,\ba)$] $\mathrel{+}= p(y=1)/N^+$
		        \Else
		            \State Q-values[$Q(\bs,\ba)$] $\mathrel{+}= -1/N$
		        \EndIf
		    \EndFor
		\EndFor
		\State RunningTotal $\gets p(y=1) - 1$ \Comment{\METRICNAME{} when $b = -\infty$}
		\State Best\METRICNAME{} $\gets$ RunningTotal
		\For {$b \in \,$Sorted(Q-values.keys())}
		    \State RunningTotal += Q-values[$b$]
		    \State Best\METRICNAME{} $\gets \max$(Best\METRICNAME{}, RunningTotal)
		\EndFor
		\State \Return Best\METRICNAME{}
	\end{algorithmic}
	\label{alg:opc}
\end{algorithm}

\section{Baseline metrics}
\label{app:baseline_impl}

We elaborate on the exact expressions used for the baseline metrics. In all baselines, ${\bf a}_{t}^\pi$ is the on-policy action $\argmax_{\bf a}Q^\pi({\bf s}_t, {\bf a})$.

\paragraph{Temporal-difference error}
The TD error is the squared error between $Q(\bs,\ba)$ and the 1-step return estimate of the action's value.
\begin{equation}
\label{eq:basic_error}
     \EX{\bs_t,\ba_t \sim \pi_b}{\left( Q^\pi({\bf s}_t, {\bf a}_t) -(r_t + \gamma Q^\pi(\bs_{t+1},{\bf a}_{t}^\pi))\right)^2}
\end{equation} 

\paragraph{Discounted sum of advantages}
The difference of the value functions of two policies $\pi$ and $\pi_b$ at state $\bs_t$ is given by the discounted sum of advantages~\citep{kakade2002, Murphy2005-qc} of $\pi$ on episodes induced by $\pi_b$: 
\begin{equation}
    V^{\pi_b}(\bs_t) - V^{\pi}(\bs_t) = \E_{\bs_t,\ba_t\sim\pi_b} \left [ \sum_{t'=t}^T \gamma^{t'-t}A^\pi(\bs_{t'}, \ba_{t'}) \right ],
    \label{eq:murphy}
\end{equation}
where $\gamma$ is the discount factor and $A^\pi$ the advantage function for policy $\pi$, defined as $A^\pi({\bf s}_t, {\bf {\bf a}_t}) = Q^\pi({\bf s}_t,{\bf a}_t) - Q^\pi({\bf s}_t,\ba_t^{\pi})$. Since $V^{\pi_b}$ is fixed, estimating (\ref{eq:murphy}) is sufficient to compare $\pi_1$ and $\pi_2$. The $\pi$ with smaller score is better.
\begin{equation}
    \E_{\bs_t,\ba_t \sim \pi_b} \left [ \sum_{t'=t}^T \gamma^{t'-t}A^\pi(\bs_{t'}, \ba_{t'}) \right ].
    \label{eq:murphy_metric}
\end{equation}

\paragraph{Monte-Carlo estimate corrected with the discounted sum of advantages}

Estimating $V^{\pi_b}(\bs_t) = \E_{\pi_b}[\sum_{t'}\gamma^{t'-t}r_{t'}]$ with the Monte-Carlo return, substituting into Eqn. (\ref{eq:murphy}), and rearranging gives
\begin{equation}
    V^{\pi}(\bs_t) = \E_{\bs_t,\ba_t\sim\pi_b} \left [ \sum_{t'=t}^T \gamma^{t'-t}\left(r_{t'} - A^\pi(\bs_{t'}, \ba_{t'})\right) \right ]
\end{equation}

With $V^\pi(\bs_t) + A^\pi(\bs_t,\ba_t) = Q^\pi(\bs_t, \ba_t)$, we can obtain an approximate ${\widetilde Q}$ %
estimate depending on the whole episode:
\begin{equation}
    {\widetilde Q}_{MCC}({\bf s}_t,{\bf a}_t,\pi) = \E_{\bs_t,\ba_t\sim\pi_b} \left [ r_t +   \sum_{t'=t+1}^T \gamma^{t'-t}(r_{t'}-A^\pi(\bs_{t'}, {\bf a}_{t'})) \right ]
   \label{eq:mcc1}
\end{equation}
The MCC Error is the squared error to this estimate.
\begin{equation}
\label{eq:mcc}
     \EX{\bs_t,\ba_t \sim \pi_b}{\left( Q^\pi({\bf s}_t, {\bf a}_t) -{\widetilde Q}_{MCC}({\bf s}_t,{\bf a}_t,\pi)\right)^2}
\end{equation} 

Note that (\ref{eq:mcc}) was proposed before by \citet{Quillen2018-ms} as a training loss for a Q-learning variant, but not for the purpose of off-policy evaluation.

Eqn. (\ref{eq:basic_error}) and Eqn. (\ref{eq:mcc1}) share the same optimal Q-function, so assuming a perfect learning algorithm, there is no difference in information between these metrics. In practice, the Q-function will not be perfect due to errors in function approximation and optimization. Eqn. (\ref{eq:mcc1}) is designed to rely on all future rewards from time $t$, rather than just $r_t$. We theorized that using more of the ``ground truth'' from $\mathcal{D}$ could improve the metric's performance in imperfect learning scenarios. This did not occur in our experiments - the MCC Error performed poorly.

\section{Argument for choosing $p(y=1) = 1$}
\label{app:prior}
The positive class prior $p(y=1)$ should intuitively depend on the environment, since some environments will have many more \good{} $(\bs,\ba)$ than others. However, recall how error $\epsilon$ is defined.
Each $\epsilon_t$ is defined as:
\begin{equation}
    \epsilon_t = \EX{\rho^+_{t,\pi}}{\sum_{\ba \in \mathcal{A}_-(\bs_t)} \pi(\ba|\bs_t)}
\end{equation}
where state distribution $\rho^+_{t,\pi}$ is defined such that $\ba_1,\cdots,\ba_{t-1}$ were all \good{}. This is equivalent to following an optimal ``expert'' policy $\pi^*$, and although we are estimating $\epsilon_t$ from data generated by behavior policy $\pi_b$, we should match the positive class prior $p(y=1)$ we would observe from expert $\pi^*$. Assuming the task is feasible, meaining the policy has \good{} actions available from the start, we have $R(\pi^*) = 1$. Therefore, although the validation dataset will likely have both successes and failures, a prior of $p(y=1) = 1$ is the ideal prior, and this holds independently of the environment.
As a didactic toy example, we show this holds for a binary tree domain. In this domain, each node is a state, actions are $\{left,right\}$, and leaf nodes are terminal with reward $0$ or $1$. We try \mbox{$p(y=1) \in \{0, 0.05, 0.1, \cdots, 0.9, 0.95, 1\}$} in two extremes: only 1 leaf fails, or only 1 leaf succeeds. No stochasticity was added. Validation data is from the uniformly random policy. The frequency of \good{} $(\bs,\ba)$ varies a lot between the two extremes, but in both Spearman correlation $\rho$ monotonically increases with $p(y=1)$ and was best with $p(y=1) = 1$.
Fig.~\ref{fig:tree_prior} shows Spearman correlation of \METRICNAME{} and \QDIFFNAME{} with respect to $p(y=1)$, when the tree is mostly success or failures. In both settings $p(y=1) = 1$ has the best correlation.

\begin{figure}[h]
    \centering
    \includegraphics[width=0.49\linewidth]{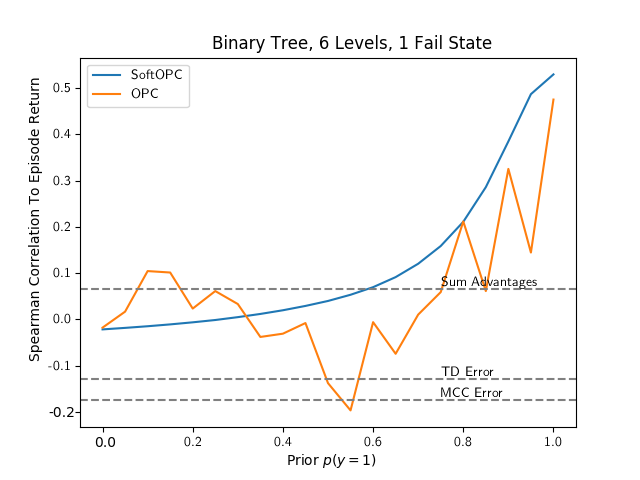}
    \includegraphics[width=0.49\linewidth]{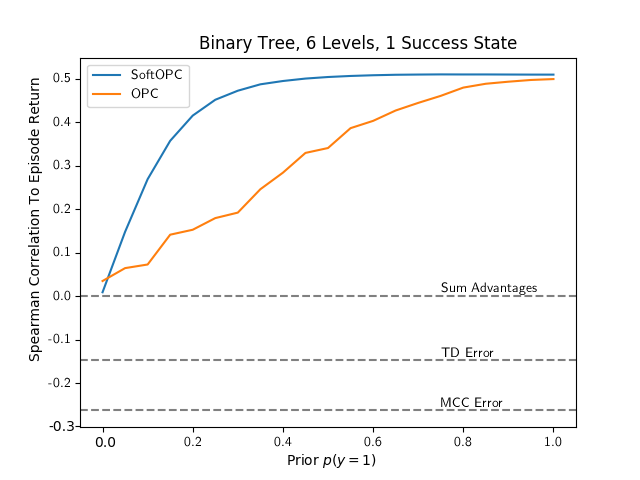}
    \caption{Spearman correlation of \QDIFFNAME{}, \METRICNAME{}, and baselines with varying $p(y=1)$. Baselines do not depend on $p(y=1)$. Correlations further from 0 are better.}
    \label{fig:tree_prior}
\end{figure}

From an implementation perspective, $p(y=1) = 1$ is also the only choice that can be applied across arbitrary validation datasets. Suppose $\pi_b$, the policy collecting our validation set, succeeds with probability $R(\pi_b) = p$.
In the practical computation of \METRICNAME{} presented in Algorithm~\ref{alg:opc}, we have $N$ transitions, and $pN$ of them have positive labels. Each $Q(\bs,\ba)$ is annotated with a score: $\frac{-1}{N}$ for unlabeled transitions and $\frac{p(y=1)}{pN} - \frac{1}{N}$ for positive labeled transitions. The maximal \METRICNAME{} score will be the sum of all annotations within the interval $[b,\infty)$, and we maximize over $b$.

For unlabeled transitions, the annotation is $-1/N$, which is negative. Suppose the annotation for positive transitions was negative as well. This occurs when $p(y=1)/(pN) - 1/N < 0$. If every annotation is negative, then the optimal choice for $b$ is $b = \infty$, since the empty set has total 0 and every non-empty subset has a negative total. This gives $\text{\METRICNAME{}}(Q) = 0$, no matter what $Q(\bs,\ba)$ we are evaluating, which makes the \METRICNAME{} score entirely independent of episode return.

This degenerate case is undesirable, and happens when $p(y=1)/(pN) < 1/N$, or equivalently $p(y=1) < p$.
To avoid this, we should have $p(y=1) \ge p = R(\pi_b)$. If we wish to pick a single $p(y=1)$ that can be applied to data from arbitrary behavior policies $\pi_b$, then we should pick $p(y=1) \ge R(\pi^*)$. In binary reward MDPs where $\pi^*$ can always succeed, this gives ${p(y=1) \ge R(\pi^*) = 1}$, and since the prior is a probability, it should satisfy $0 \le p(y=1) \le 1$, leaving $p(y=1) = 1$ as the only option.

To complete the argument, we must handle the case where we have a binary reward MDP where $R(\pi^*) < 1$. In a binary reward MDP, the only way to have $R(\pi^*) < 1$ is if the sampled initial state $\bs_1$ is one where $(\bs_1, \ba)$ is \bad{} for all $\ba$. From these $\bs_1$, and all future $\bs_t$ reachable from $\bs_1$, the actions $\pi$ chooses do not matter - the final return will always be $0$. Since $\epsilon_t$ is defined conditioned on only executing \good{} actions so far, it is reasonable to assume we only wish to compute the expectation over states where our actions can impact reward.
If we computed optimal policy return $R(\pi^*)$ over just the initial states $\bs_1$ where \good{} actions exist, we have $R(\pi^*) = 1$, giving $1 \le p(y=1) \le 1$ once again.

\section{Experiment details}

\subsection{Binary tree environment details}
\label{app:binary_details}

The binary tree is a full binary tree with $k = 6$ levels. The initial state distribution is uniform over all non-leaf nodes. Initial state may sometimes be initialized to one where failure is inevitable. The validation dataset is collected by generating 1,000 episodes from the uniformly random policy. For Q-functions, we generate 1,000 random Q-functions by sampling $Q(\bs, \ba) \sim U[0,1]$ for every $(\bs, \ba)$, defining the policy as $\pi(\bs) = \argmax_\ba Q(\bs,\ba)$. We try priors $p(y=1) \in \{0, 0.05, 0.1, \cdots, 0.9, 0.95, 1\}$. %
Code for this environment is available at \url{https://gist.github.com/alexirpan/54ac855db7e0d017656645ef1475ac08}.

\subsection{Pong details}
\label{app:pong_details}

Fig.~\ref{fig:atari_results_labeled} is a scatterplot of our Pong results. Each color represents a different hyperparameter setting, as explained in the legend. From top to bottom, the abbreviations in the legend mean:

\begin{itemize}
    \item \code{DQN}: trained with DQN
    \item \code{DDQN}: trained with Double DQN
    \item \code{DQN\_gamma9}: trained with DQN, $\gamma=0.9$ (default is $0.99$).
    \item \code{DQN2}: trained with DQN, using a different random seed
    \item \code{DDQN2}: trained with Double DQN, using a different random seed
    \item \code{DQN\_lr1e4}: trained with DQN, learning rate $10^{-4}$ (default is $2.5 \times 10^{-4}$).
    \item \code{DQN\_b64}: trained with DQN, batch size 64 (default is 32).
    \item \code{DQN\_fixranddata}: The replay buffer is filled entirely by a random policy, then a DQN model is trained against that buffer, without pushing any new experience into the buffer.
    \item \code{DDQN\_fixranddata}: The replay buffer is filled entirely by a random policy, then a Double DQN model is trained against that buffer, without pushing new experience into the buffer.
\end{itemize}

In Fig.~\ref{fig:atari_results_labeled}, models trained with $\gamma=0.9$ are highlighted. We noticed that \QDIFFNAME{} was worse at separating these models than \METRICNAME{}, suggesting the 0-1 loss is preferable in some cases. This is discussed further in Appendix~\ref{app:opc_comparison}.

In our implementation, all models were trained in the full version of the Pong game, where the maximum return possible is $21$ points. However, to test our approach we create a binary version for evaluation. Episodes in the validation set were truncated after the first point was scored. Return of the policy was estimated similarly: the policy was executed until the first point is scored, and the average return is computed over these episodes. Although the train time environment is slightly different from the evaluation environment, this procedure is fine for our method, since our method can handle environment shift and we treat $Q(\bs,\ba)$ as a black-box scoring function. \METRICNAME{} can be applied as long as the validation dataset matches the semantics of the test environment where we estimate  the final return.

\begin{figure}[h]
    \centering
    \includegraphics[width=0.48\linewidth]{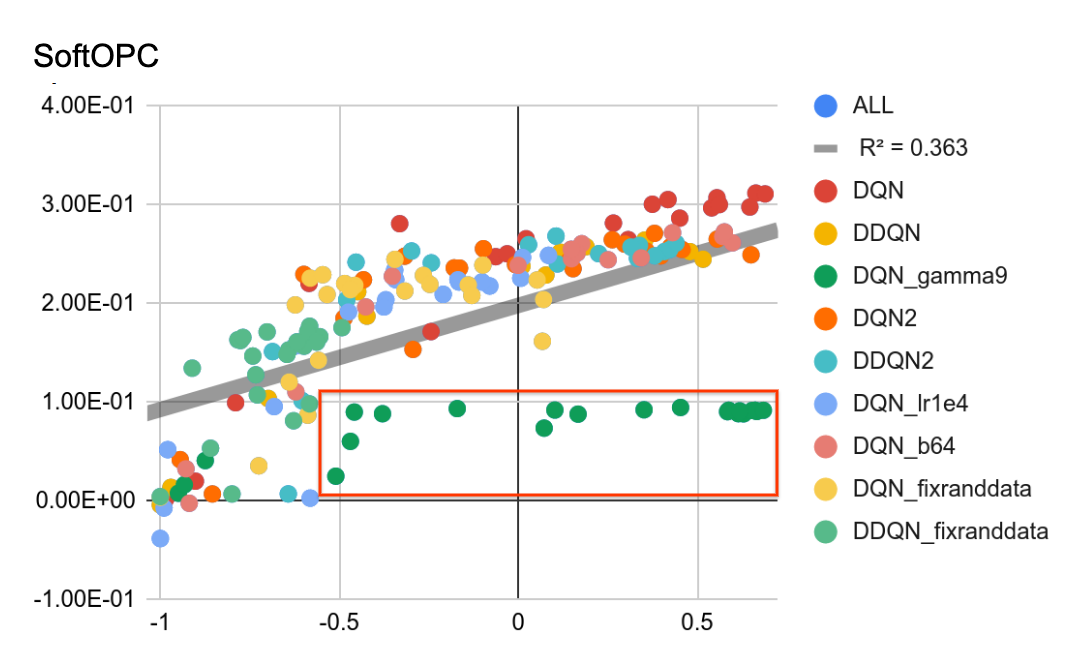}
    \includegraphics[width=0.48\linewidth]{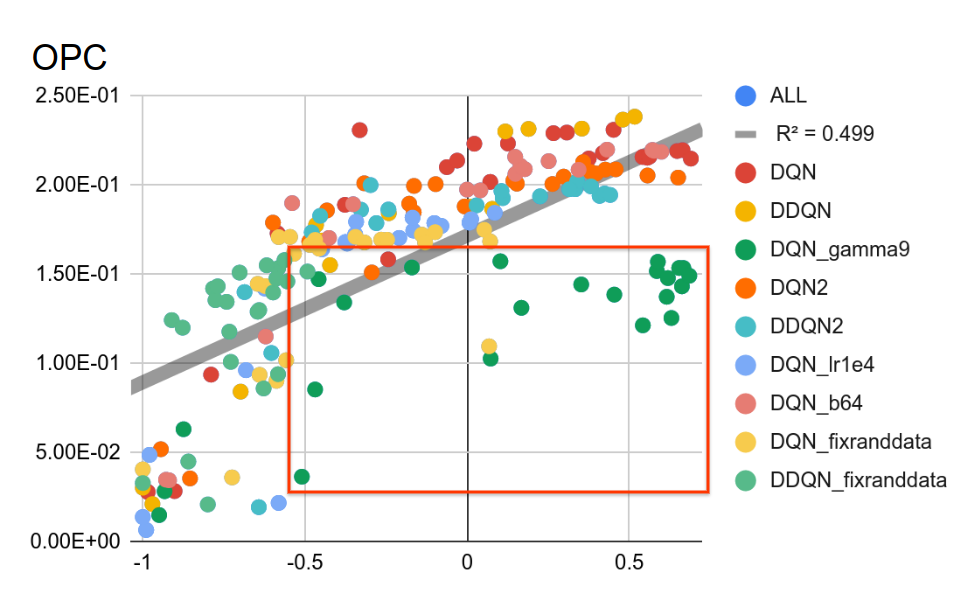}
    \caption{Scatterplot of episode return (x-axis) of Pong models against metric (y-axis), for \QDIFFNAME{} (left) and \METRICNAME{} (right). Each color is a set of model checkpoints from a different hyperparameter setting, with the legend explaining the mapping from color to hyperparameters. In each plot, points trained with DQN, $\gamma=0.9$ are boxed with a red rectangle. We observed that the hard 0-1 loss in \METRICNAME{} does a better job separating these models than the soft loss in \QDIFFNAME{}.}
    \label{fig:atari_results_labeled}
\end{figure}

\subsection{Simulated grasping details}
\label{app:sim_details}

The objects we use were generated randomly through procedural generation. The resulting objects are highly irregular and are often non-convex. Some example objects are shown in Fig.~\ref{fig:object_examples}.

\begin{figure}[h]
    \centering
    \begin{subfigure}[b]{0.45\linewidth}
    \includegraphics[width=\linewidth]{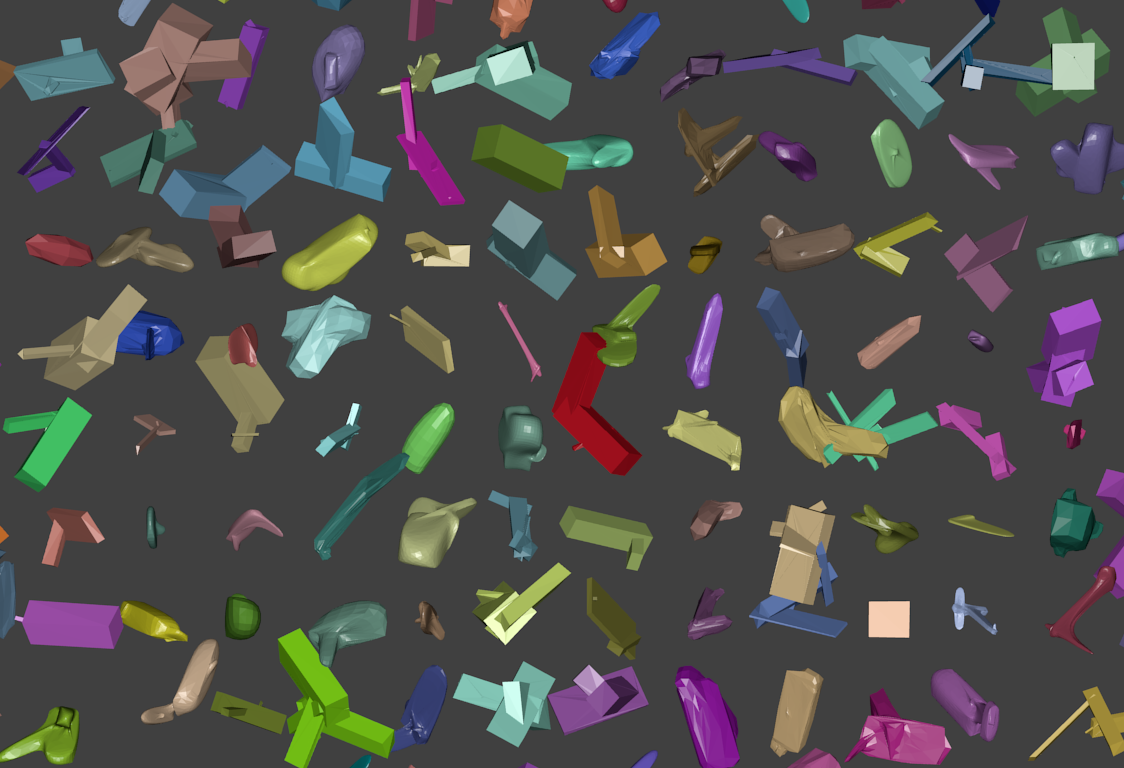}
    \caption{Procedurally generated objects for simulation}
     \label{fig:object_examples}
    \end{subfigure}
    \begin{subfigure}[b]{0.45\linewidth}
    \includegraphics[width=\linewidth]{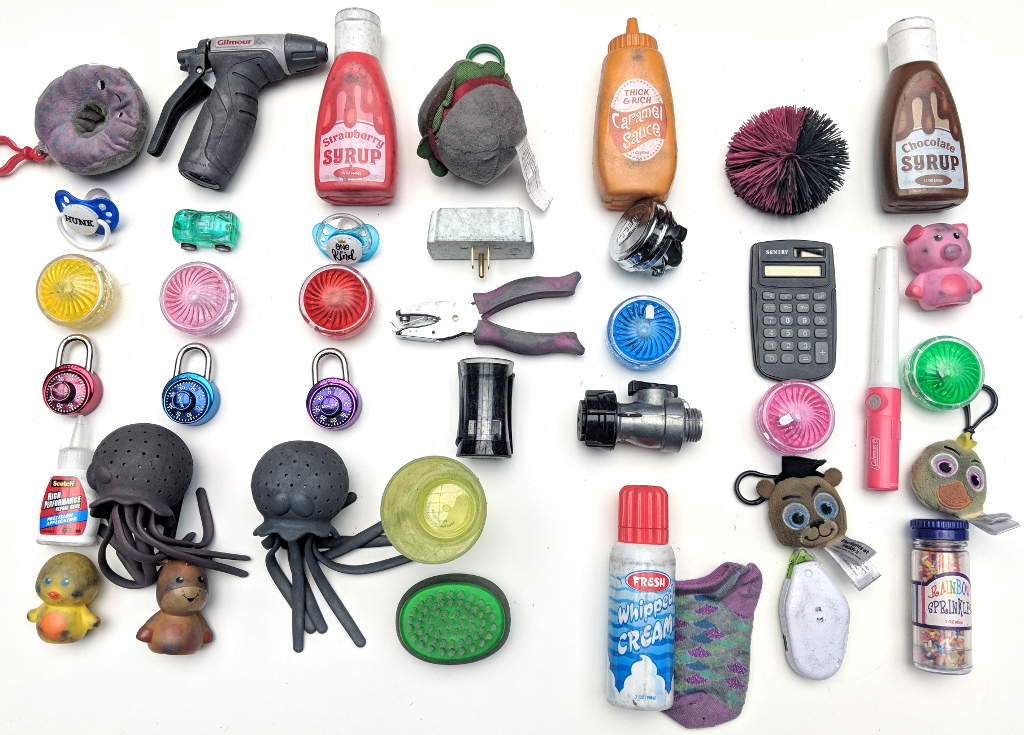}
    \caption{Test objects for real-world evaluation}
    \label{fig:real_object_examples}
    \end{subfigure}
    \caption{\textbf{(a):} Example objects from the procedural generation process used during training in simulation. \textbf{(b):} Real test objects
    used during evaluation on real robots.}
\end{figure}

Fig.~\ref{fig:bellman_overfit} demonstrates two generalization problems from Sect.~\ref{sec:generalization}: \textit{insufficient off-policy training data} and \textit{mismatched off-policy training data}.
We trained two models with a limited 100k grasps dataset or a large 900k grasps dataset, then evaluated grasp success.
The model with limited data fails to achieve stable grasp success due to overfitting to its limited dataset. Meanwhile, the model with abundant data
learns to model the train objects, but fails to model the test objects, since they are unobserved at training time. The train and test objects used are show in Fig.~\ref{fig:train_test_objects}.

\begin{figure}[h]
    \centering
    \includegraphics[width=0.329\linewidth]{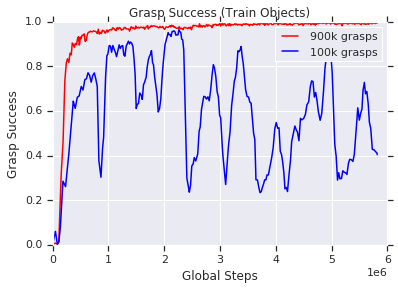}
    \includegraphics[width=0.329\linewidth]{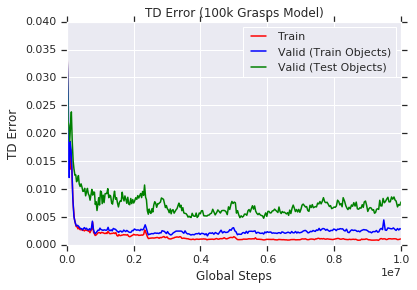}
    \includegraphics[width=0.329\linewidth]{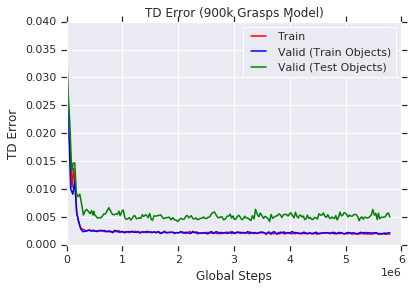}
    \caption{%
    \textit{Left:} Grasp success curve of model trained with 900k or 100k grasps. The 100k grasps model oscillates in performance. \textit{Middle:} We see why: holdout TD error (blue) of the 100k grasps model is increasing. \textit{Right}: The TD Error for the 900k grasp model is the same for train and holdout, but is still larger for test data on unseen test objects.}
    \label{fig:bellman_overfit}
\end{figure}

\begin{figure}[h]
\begin{subfigure}[b]{0.45\linewidth}
\includegraphics[width=0.19\linewidth]{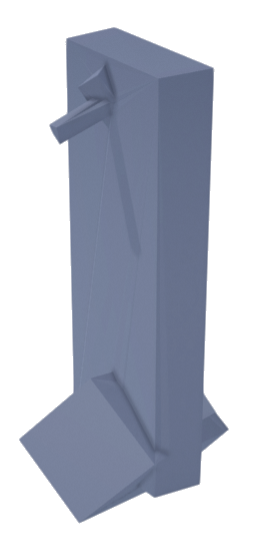}
\includegraphics[width=0.19\linewidth]{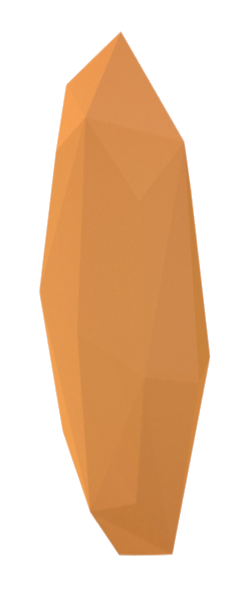}
\includegraphics[width=0.19\linewidth]{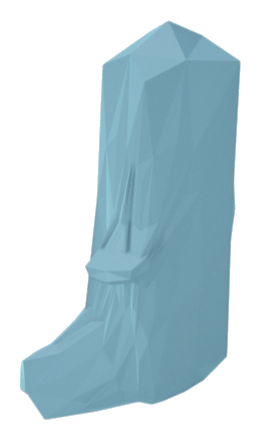}
\includegraphics[width=0.19\linewidth]{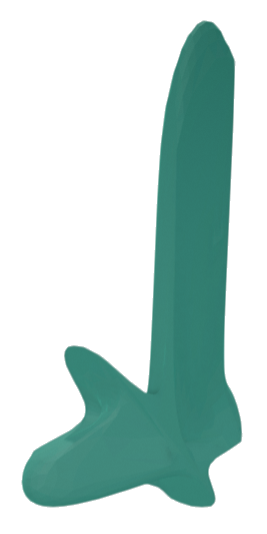}
\includegraphics[width=0.19\linewidth]{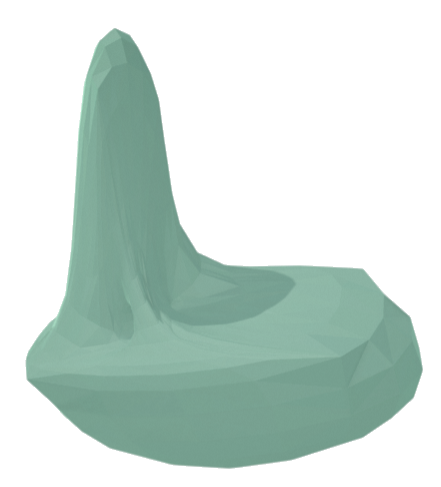}
\caption{Train objects}
\end{subfigure}
\hfill
\begin{subfigure}[b]{0.45\linewidth}
\includegraphics[width=0.19\linewidth]{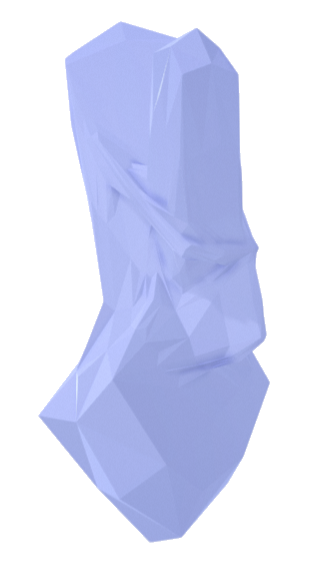}
\includegraphics[width=0.19\linewidth]{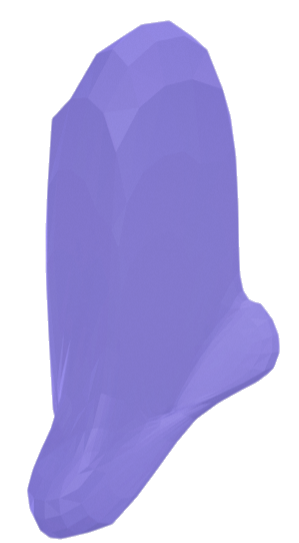}
\includegraphics[width=0.19\linewidth]{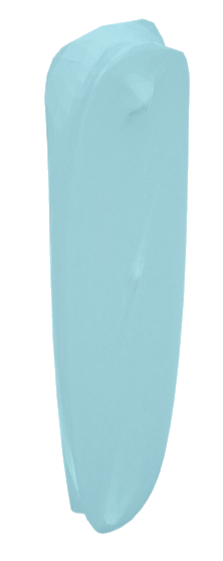}
\includegraphics[width=0.19\linewidth]{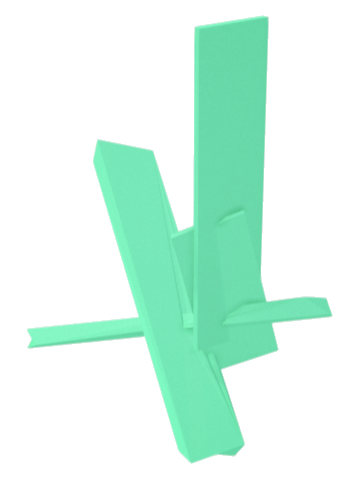}
\includegraphics[width=0.19\linewidth]{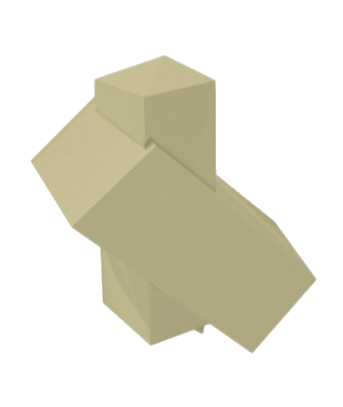}
\caption{Test objects}
\end{subfigure}
\caption{\textbf{Example of mismatched off-policy training data.} Train objects and test objects from simulated grasping task in Sect.~\ref{sec:sim-robot}. Given large amounts of data from train objects, models do not fully generalize to test objects.}
\label{fig:train_test_objects}
\end{figure}

\subsection{Real-world grasping}
\label{app:real_details}
\begin{figure}[h]
    \centering
    \includegraphics[width=\linewidth]{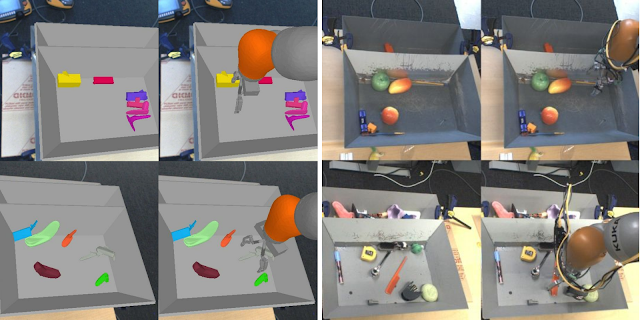}
    \caption{Visual differences for the robot grasping task between simulation (left) and reality (right). In simulation, models are trained to grasp procedurally generated shapes while in reality objects have more complex shapes. The simulated robot arm does not have the same colors and textures as the real robot and lacks the visible real cables. The tray in reality has a greater variation in appearance than in the simulation.}
    \label{fig:kuka_real_v_sim}
\end{figure}

Several visual differences between simulation and reality limit the real performance of model trained in simulation (see Fig.~\ref{fig:kuka_real_v_sim}) and motivate simulation-to-reality methods such as the Randomized-to-Canonical Adaptation Networks (RCANs), as proposed by~\citet{james2019rcan}. The 15 real-world grasping models evaluated were trained using variants of RCAN. These networks train a generator to transform randomized simulation images to a canonical simulated image. A policy is learned over this canonical simulated image. At test time, the generator transforms real images to the same canonical simulated image, facilitating zero-shot transfer. Optionally, the policy can be fine-tuned with real-world data, in this case the real-world training objects are distinct from the evaluation objects. The \QDIFFNAME{} and real-world grasp success of each model is listed in Table~\ref{tab:real-world-details}. From top-to-bottom, the abbreviations mean:
\begin{itemize}
    \item        \code{Sim:} A model trained only in simulation.
    \item        \code{Randomized Sim:} A model trained only in simulation with the \textit{mild randomization} scheme from~\citet{james2019rcan}: random tray texture, object texture and color, robot arm color, lighting direction and brightness, and one of 6 background images consisting
    of 6 different images from the view of the real-world camera. 
    \item        \code{Heavy Randomized Sim:} A model trained only in simulation with the \textit{heavy randomization} scheme from~\citet{james2019rcan}: every randomization from \textit{Randomized Sim}, as well as slight randomization of the position of the robot arm and tray, randomized position of the divider within the tray (see Figure 1b in main text for a visual of the divider), and a more diverse set of background images.
    \item        \code{Randomized Sim + Real (2k):} The \code{Randomized Sim} Model, after training on an additional 2k grasps collected on-policy on the real robot.
    \item        \code{Randomized Sim + Real (3k):} The \code{Randomized Sim} Model, after training on an additional 3k grasps collected on-policy on the real robot.
    \item        \code{Randomized Sim + Real (4k):} The \code{Randomized Sim} Model, after training on an additional 4k grasps collected on-policy on the real robot.
    \item        \code{Randomized Sim + Real (5k):} The \code{Randomized Sim} Model, after training on an additional 5k grasps collected on-policy on the real robot.
    \item        \code{RCAN:} The RCAN model, as described in~\cite{james2019rcan}, trained in simulation with a pixel level adaptation model.
    \item        \code{RCAN + Real (2k):} The \code{RCAN} model, after training on an additional 2k grasps collected on-policy on the real robot. 
    \item        \code{RCAN + Real (3k):} The \code{RCAN} model, after training on an additional 3k grasps collected on-policy on the real robot.
    \item        \code{RCAN + Real (4k):} The \code{RCAN} model, after training on an additional 4k grasps collected on-policy on the real robot.
    \item        \code{RCAN + Real (5k):} The \code{RCAN} model, after training on an additional 5k grasps collected on-policy on the real robot.
    \item        \code{RCAN + Dropout:} The \code{RCAN} model with dropout applied in the policy.
    \item        \code{RCAN + InputDropout:} The \code{RCAN} model with dropout applied in the policy and RCAN generator.
    \item        \code{RCAN + GradsToGenerator:} The \code{RCAN} model where the policy and RCAN generator are trained simultaneously, rather than training RCAN first and the policy second.
\end{itemize}

\begin{table}[h]
    \centering
    \begin{tabular}{lcc}
            \textbf{Model}& \textbf{\QDIFFNAME{}} & \textbf{Real Grasp Success (\%)} \\\hline
            Sim		&0.056 &	16.67\\
            Randomized Sim	&	0.072	&	36.92 \\
            Heavy Randomized Sim&		0.040&	34.90\\
            Randomized Sim + Real (2k)		&	0.129	&	72.14 \\
            Randomized Sim + Real (3k)	&	0.141&	73.65 \\
            Randomized Sim + Real (4k)	&		0.149&	82.92\\
            Randomized Sim + Real (5k)	&		0.152&	84.38\\
            RCAN		&		0.113&	65.69\\
            RCAN + Real (2k)		&	0.156&	86.46\\
            RCAN + Real (3k)		&	0.166&	88.34\\
            RCAN + Real (4k)		&	0.152&	87.08\\
            RCAN + Real (5k)		&	0.159&	90.71\\
            RCAN + Dropout	&		0.112&	51.04\\
            RCAN + InputDropout		&	0.089&	57.71\\
            RCAN + GradsToGenerator	&	0.094	&	58.75\\
    \end{tabular}
    
    \caption{Real-world grasping models used for Sect.~\ref{sec:sim-robot} simulation-to-reality experiments.}
    \label{tab:real-world-details}
\end{table}

\section{\QDIFFNAME{} performance on different validation datasets}
\label{app:different_dataset}

For real grasp success we use 7 KUKA LBR IIWA robot arms to each make 102 grasp attempts from 7 different bins with test objects (see Fig.~\ref{fig:real_object_examples}).
Each grasp attempt is allowed up to 20 steps and any grasped object is dropped back in the bin, a successful grasp is made if any of the test objects is held in the gripper at the end of the episode.

For estimating \QDIFFNAME{}, we use a validation dataset collected from two policies, a poor policy with a success of 28\%, and a better policy with a success of 51\%. We divided the validation dataset based on the policy used, then evaluated \QDIFFNAME{} on data from only the poor or good policy. Fig.~\ref{fig:metric_by_policies} shows the correlation on these subsets of the validation dataset. The correlation is slightly worse on the poor dataset, but the relationship between \QDIFFNAME{} and episode reward is still clear.

\begin{figure}[h]
\centering
        \includegraphics[width=0.328\linewidth]{images/metric_softopc_all_policies.png}
        \includegraphics[width=0.328\linewidth]{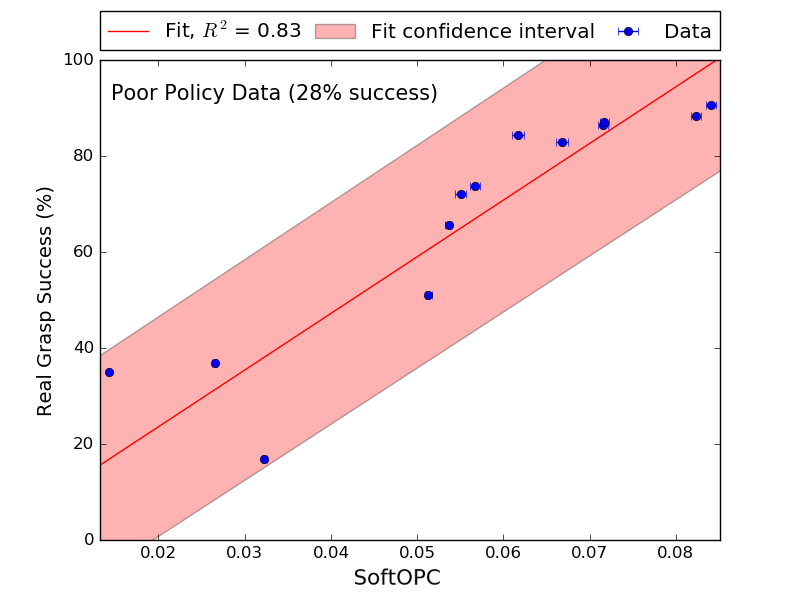}
        \includegraphics[width=0.328\linewidth]{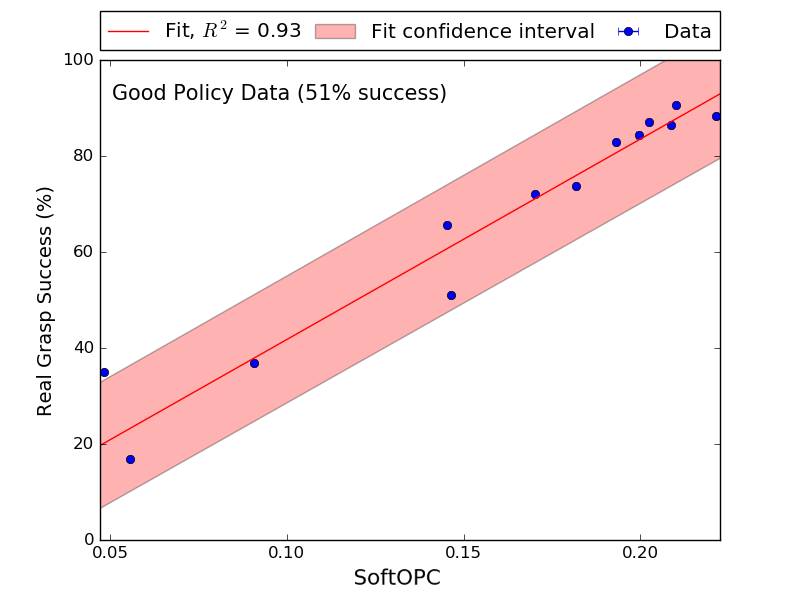}
    \caption{\textbf{\QDIFFNAME{} versus the real grasp success over different validation datasets for Real-World grasping.}
    \textit{Left}: \QDIFFNAME{} over entire validation dataset.
    \textit{Middle}: \QDIFFNAME{} over validation data from only the poor policy (28\% success rate). \textit{Right}: \QDIFFNAME{} over validation data from only the better policy (51\% success). In each, a fitted regression
    line with its $R^2$ and 95\% confidence interval is also shown.}\label{fig:metric_by_policies}
\end{figure}

As an extreme test of robustness, we go back to the simulated grasping environment. We collect a new validation dataset, using the same human-designed policy with $\epsilon=0.9$ greedy exploration instead. The resulting dataset is almost all failures, with only 1\% of grasps succeeding. However, this dataset also covers a broad range of states, due to being very random. Fig.~\ref{fig:sim_metrics_baddata} shows the \METRICNAME{} and \QDIFFNAME{} still perform reasonably well, despite having very few positive labels. From a practical standpoint, this suggests that \METRICNAME{} or \QDIFFNAME{} have some robustness to the choice of generation process for the validation dataset.

\begin{figure}[h]
\centering
        \includegraphics[width=0.48\linewidth]{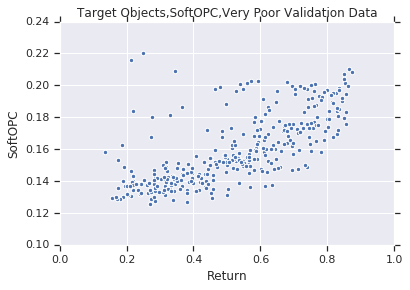}
        \includegraphics[width=0.48\linewidth]{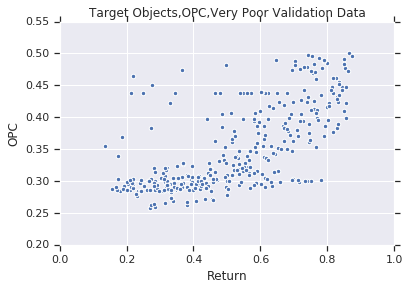}
    \caption{\textbf{\QDIFFNAME{} and \METRICNAME{} over almost random validation data on test objects in simulated grasping.} We generate a validation dataset from an $\epsilon$-greedy policy where $\epsilon = 0.9$, leading to a validation dataset where only 1\% of episodes succeed.
    \textit{Left}: \QDIFFNAME{} over the poor validation dataset. ${R^2 = 0.83}, {\rho = 0.94}$. \textit{Right}: \METRICNAME{} over the poor validation dataset. $R^2=0.83, \rho=0.88$.}\label{fig:sim_metrics_baddata}
\end{figure}

\section{Plots of Q-value distributions}
\label{app:qvalue_distribution}

In Fig.~\ref{fig:q_dist}, we plot the Q-values of two real-world grasping models. The first is trained only in simulation and has poor real-world grasp success. The second is trained with a mix of simulated and real-world data. We plot a histogram of the average Q-value over each episode of validation set $\mathcal{D}$. The better model has a wider separation between successful Q-values and failed Q-values.

\begin{figure}[h]
    \centering
        \includegraphics[width=0.48\linewidth]{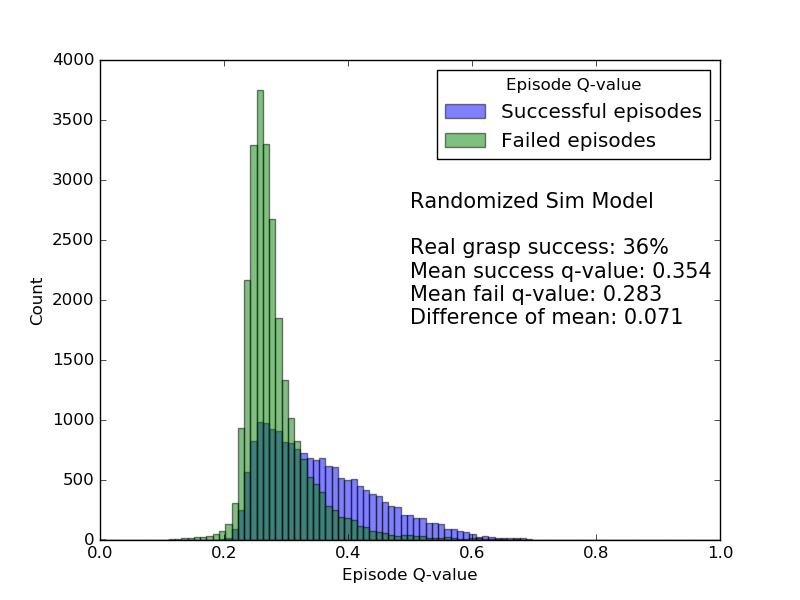} 
        \includegraphics[width=0.48\linewidth]{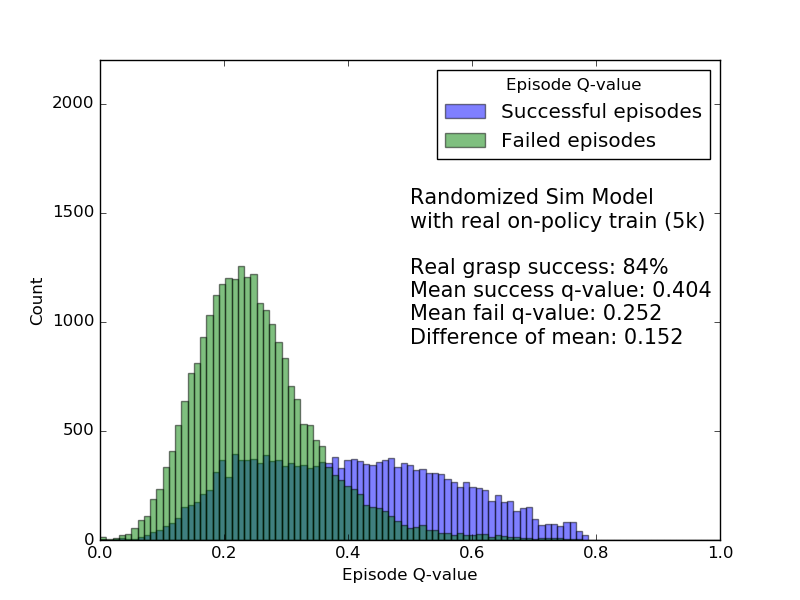}
    \caption{\textbf{Q-value distributions for successful and failed episodes.} \textit{Left:} Q-value distributions over successful and failed episodes in an off-policy data-set according to a learned policy with a poor grasp success rate of 36\%. \textit{Right:} The same distributions after the learned policy is improved by 5,000 grasps of real robot data, achieving a 84\% grasp success rate.}\label{fig:q_dist}
\end{figure}

\begin{figure}[h]
    \centering
    \begin{subfigure}[b]{0.48\linewidth}
        \includegraphics[width=\linewidth]{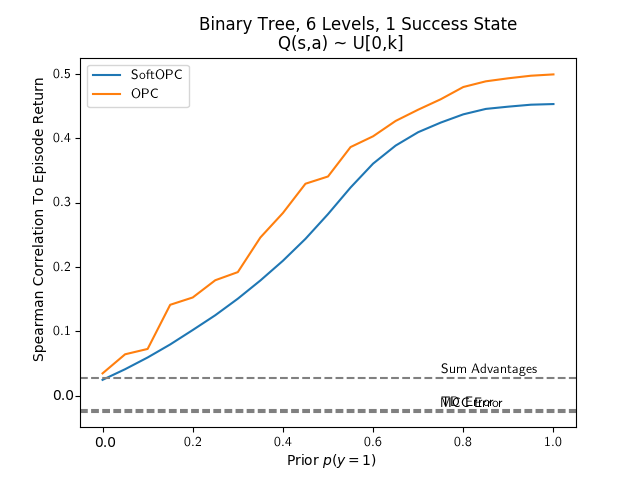}
        \caption{$Q(\bs,\ba)\sim U[0,k]$}
        \label{fig:magnitude_vary}
    \end{subfigure}
    \begin{subfigure}[b]{0.48\linewidth}
        \includegraphics[width=\linewidth]{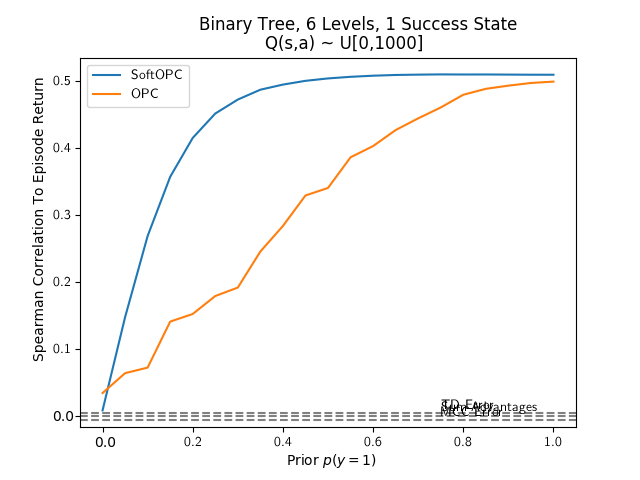}
        \caption{$Q(\bs,\ba)\sim U[0,1000]$}
        \label{fig:magnitude_big}
    \end{subfigure}
    \caption{Spearman correlation in binary tree with one success state for different Q-function generation methods. Varying magnitudes between Q-functions causes the \QDIFFNAME{} to perform worse.}\label{fig:magnitudes}
\end{figure}
\vspace{-0.10in}

\section{Comparison of \METRICNAME{} and \QDIFFNAME{}}
\label{app:opc_comparison}

We elaborate on the argument presented in the main paper, that \METRICNAME{} performs better when $Q(\bs,\ba)$ have different magnitudes, and otherwise \QDIFFNAME{} does better.
To do so, it is important to consider how the Q-functions were trained. In the tree environments, $Q(\bs,\ba)$ was sampled uniformly from $U[0,1]$, so $Q(\bs,\ba) \in [0,1]$ by construction. In the grasping environments, the network architecture ends in $\sigmoidfn(x)$, so $Q(\bs, \ba) \in [0,1]$. In these experiments, \QDIFFNAME{} did better. In Pong, $Q(\bs,\ba)$ was not constrained in any way, and these were the only experiments where discount factor $\gamma$ was varied between models. Here, \METRICNAME{} did better.

The hypothesis that Q-functions of varying magnitudes favor \METRICNAME{} can be validated in the tree environment. Again, we evaluate 1,000 Q-functions, but instead of sampling $Q(\bs,\ba)\sim U[0,1]$, the $k$th Q-function is sampled from $Q(\bs,\ba) \sim U[0,k]$. This produces 1,000 different magnitudes between the compared Q-functions.
Fig.~\ref{fig:magnitude_vary} demonstrates that when magnitudes are deliberately changed for each Q-function, the \QDIFFNAME{} performs worse, whereas the non-parametric \METRICNAME{} is unchanged. To demonstrate this is caused by a difference in magnitude, rather than large absolute magnitude, \METRICNAME{} and \QDIFFNAME{} are also evaluated over $Q(\bs,\ba)\sim U[0,1000]$. Every Q-function has high magnitude, but their magnitudes are consistently high. As seen in Fig.~\ref{fig:magnitude_big}, in this setting the \QDIFFNAME{} goes back to outperforming \METRICNAME{}.

\section{Scatterplots of each metric}

In Figure~\ref{fig:scatterplots}, we present scatterplots of each of the metrics in the simulated grasping environment from Sect.~\ref{sec:sim-robot}. We trained two Q-functions in a fully off-policy fashion, one with a dataset of $100,000$ episodes, and the other with a dataset of $900,000$ episodes. For every metric, we generate a scatterplot of all the model checkpoints. Each model checkpoint is color coded by whether it was trained with $100,000$ episodes or $900,000$ episodes.

\begin{figure}[h]
    \centering
    \includegraphics[width=0.45\linewidth]{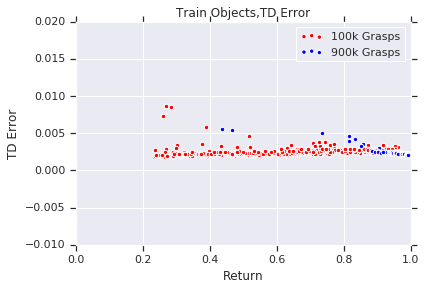}
    \includegraphics[width=0.45\linewidth]{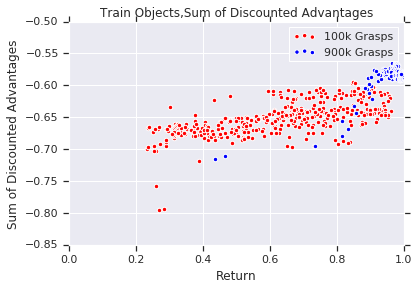}
    \includegraphics[width=0.45\linewidth]{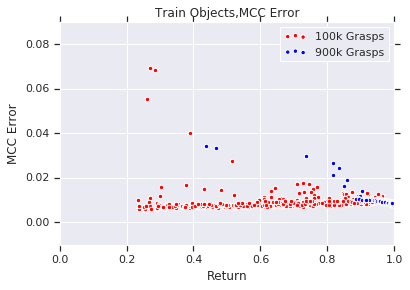}
    \includegraphics[width=0.45\linewidth]{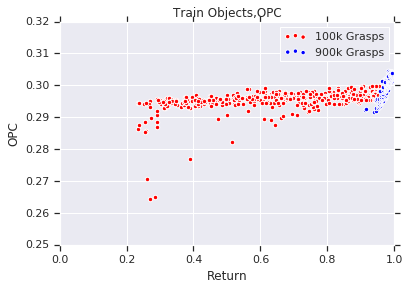}
    \includegraphics[width=0.45\linewidth]{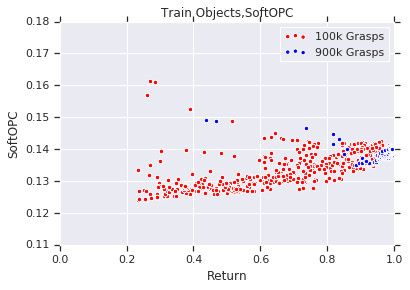}
    \caption{
    \textbf{Scatterplots of each metric in simulated grasping over train objects.} From left-to-right, top-to-bottom, we present scatterplots for: the TD error, $\sum \gamma^{t'} A^\pi(\bs_{t'}, \ba_{t'})$, MCC error, OPC, and SoftOPC.}
    \label{fig:scatterplots}
\end{figure}

\section{Extension to non-binary reward tasks}
\label{app:nonbinary}

Here, we present a template for how we could potentially extend \METRICNAME{} to non-binary, dense rewards.

First, we reduce dense reward MDPs to a sparse reward MDP, by applying a return-equivalent reduction introduced by~\citet{arjona2018rudder}. Given two MDPs, the two are \textit{return-equivalent} if (i) they differ only in the reward distribution and (ii) they have the same expected return at $t=0$ for every policy $\pi$. We show all dense reward MDPs can be reduced to a return-equivalent sparse reward MDP.

Given any MDP (${\cal S}, {\cal A}, {\cal P}, {\cal S}_0,r, \gamma$), we can augment the state space to create a return-equivalent MDP. First, augment state $\bs$ to be $(\bs, \br)$, where $\br$ is an added feature for accumulating reward. At the initial state, $\br = 0$. At time $t$ in the original MDP, whenever the policy would receive reward $r_t$, instead it receives no reward, but $r_t$ is added to the $\br$ feature. This gives $(\bs_1, 0), (\bs_2, r_1), (\bs_3, r_1+r_2)$, and so on. Adding $\br$ lets us maintain the Markov property, and at the final timestep, the policy receives reward equal to accumulated reward $\br$.

This MDP is return-equivalent and a sparse reward task. However, we do note that this requires adding an additional feature to the state space. Q-functions $Q(\bs,\ba)$ trained in the original MDP will not be aware of $\br$. To handle this, note that for $Q(\bs,\ba)$, the equivalent Q-function $Q'(\bs,\br,\ba)$ in the new MDP should satisfy $Q'(\bs,\br,\ba) = \br + Q(\bs, \ba)$, since $\br$ is return so far and $Q(\bs,\ba)$ is estimated future return from the original MDP. At evaluation time, we can compute $\br$ at each $t$ and adjust Q-values accordingly.

This reduction lets us consider just the sparse non-binary reward case. To do so, we first use a well-known lemma.

\begin{lemma}
Let $X$ be a discrete random variable over the real numbers with $n$ outcomes, where $X \in \{c_1, c_2, \cdots, c_n\}$, outcome $c_i$ occurs with probability $p_i$, and without loss of generalization $c_1 < c_2 < \cdots < c_n$. Then $\EX{}{X} = c_1 + \sum_{i=2}^{n} (c_i - c_{i-1})P(X \ge c_i)$.
\end{lemma}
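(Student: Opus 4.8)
The plan is to rewrite each value $c_i$ as a telescoping sum of consecutive gaps, substitute into the definition of expectation, and then exchange the order of the two resulting sums to recognize the tail probabilities $P(X \ge c_i)$.

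First I would note that for every $i$,
\begin{equation}
    c_i = c_1 + \sum_{j=2}^{i} (c_j - c_{j-1}),
\end{equation}
where the empty sum (for $i=1$) is zero. Plugging this into $\EX{}{X} = \sum_{i=1}^n p_i c_i$ and using $\sum_{i=1}^n p_i = 1$ gives
\begin{equation}
    \EX{}{X} = c_1 + \sum_{i=1}^n p_i \sum_{j=2}^{i} (c_j - c_{j-1}).
\end{equation}
The double sum ranges over all pairs $(i,j)$ with $2 \le j \le i \le n$, so I can swap the order of summation: for a fixed $j \in \{2,\dots,n\}$, the inner index $i$ runs over $\{j, j+1, \dots, n\}$. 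This yields
\begin{equation}
    \sum_{i=1}^n p_i \sum_{j=2}^{i} (c_j - c_{j-1}) = \sum_{j=2}^{n} (c_j - c_{j-1}) \sum_{i=j}^{n} p_i = \sum_{j=2}^{n} (c_j - c_{j-1}) P(X \ge c_j),
\end{equation}
where the last equality uses that the outcomes are strictly increasing, so $\{X \ge c_j\}$ is exactly the event $\{X \in \{c_j, c_{j+1}, \dots, c_n\}\}$ and has probability $\sum_{i=j}^n p_i$. Combining the two displays gives the claimed identity.

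There is essentially no hard step here: the only thing to be careful about is the index bookkeeping in the swap of summation order and the observation that strict monotonicity of $c_1 < \cdots < c_n$ is what makes $\sum_{i \ge j} p_i$ equal to $P(X \ge c_j)$ (with ties one would get $P(X \ge c_j)$ still, but the partition into distinct outcomes would need restating). As an alternative I could run a short induction on $n$, peeling off the largest outcome $c_n$ and re-normalizing, but the telescoping argument is more direct and avoids conditioning subtleties, so that is the route I would take.
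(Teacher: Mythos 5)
Your proof is correct, and it is essentially the paper's argument run in the opposite direction: the paper expands the right-hand side and collects the telescoping coefficient of each $p_j$, while you telescope each $c_i$ on the left-hand side and swap the order of summation — the same double sum either way. Nothing further is needed.
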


\begin{proof}
Consider the right hand side. Substituting $1 = \sum_{i} p_i$ and expanding $P(X \ge c_i)$ gives
\begin{equation}
    c_1\sum_{j=1}^n p_j + \sum_{i=2}^n(c_i - c_{i-1})\sum_{j=i}^n p_j
\end{equation}
For each $p_j$, the coefficient is $c_1 + (c_2-c_1) + \cdots + (c_j - c_{j-1}) = c_j$. The total sum is $\sum_{j=1}^n c_jp_j$, which matches the expectation $\EX{}{X}$.
\end{proof}

The return $R(\pi)$ can be viewed as a random variable depending on $\pi$ and the environment. We add a simplifying assumption: the environment has a finite number of return outcomes, all of which appear in dataset $\mathcal{D}$. Letting those outcomes be $c_1, c_2, \cdots, c_n$, estimating $P(R(\pi) \ge c_i)$ for each $c_i$ would be sufficient to estimate expected return.

The key insight is that to estimate $P(R(\pi) \ge c_i)$, we can define one more binary reward MDP. In this MDP, reward is sparse, and the final return is $1$ if original return is $\ge c_i$, and $0$ otherwise. The return of $\pi$ in this new MDP is exactly $P(R(\pi) \ge c_i)$, and can be estimated with \METRICNAME{}. The final pseudocode is provided below.

\begin{algorithm}
	\caption{Thresholded Off-Policy Classification} 
	\begin{algorithmic}[1]
	    \Require Dataset $\mathcal{D}$ of trajectories $\tau=(\bs_1, \ba_1, \ldots, \bs_T, \ba_T)$, learned Q-function $Q(\bs, \ba)$, return threshold $c_i$
	    \For {$\tau \in \mathcal{D}$} \Comment{Generate Q-values and check threshold}
	        \For {$t = 1, 2, \ldots, T$}
	            \State $\br_t \gets \sum_{t'=1}^t r_{t'}$
	            \State Compute $Q'(\bs_t, \br, \ba_t) = \br_t + Q(\bs, \ba)$
	        \EndFor
	        \State Save each $Q'(\bs_t, \br, \ba_t)$, as well as whether $\sum_{t=1}^T r_t$ exceeds $c_i$.
	    \EndFor
	    \State \Return OPC for the computed $Q'(\bs, \br, \ba)$ and threshold $c_i$.
	\end{algorithmic}
	\label{alg:threshold-opc}
\end{algorithm}
\begin{algorithm}
	\caption{Extended Off-Policy Classification} 
	\begin{algorithmic}[1]
	    \Require Dataset $\mathcal{D}$ of trajectories $\tau=(\bs_1, \ba_1, \ldots, \bs_T, \ba_T)$, learned Q-function $Q(\bs, \ba)$.
	    \State $c_1, \cdots, c_n \gets$ The $n$ distinct return outcomes within $\mathcal{D}$, sorted
	    \State total $\gets c_1$
	    \For {$i = 2, \ldots, n$}
	        \State total $\mathrel{+}= (c_i - c_{i-1})$ThresholdedOPC($Q(\bs, \ba), \mathcal{D}, c_i$)
	    \EndFor
		\State \Return total
	\end{algorithmic}
	\label{alg:extended-opc}
\end{algorithm}

\end{document}